\theoremstyle{plain}
\newtheorem{theorem}{Theorem}[section]
\newtheorem{lemma}[theorem]{Lemma}
\theoremstyle{definition}
\newtheorem{definition}[theorem]{Definition}
\newtheorem{assumption}[theorem]{Assumption}
\theoremstyle{remark}
\newtheorem{assertion}[theorem]{Assertion}
\icmltitlerunning{Why Does Reasoning Length Converge? Unveiling the Underfitting-Overfitting Trade-off in Chain-of-Thought}
\begin{document}

\twocolumn[
  \icmltitle{Why Does Reasoning Length Converge? \\ Unveiling the Underfitting-Overfitting Trade-off in Chain-of-Thought}



  \icmlsetsymbol{equal}{*}

  \begin{icmlauthorlist}
  \icmlauthor{Zeyu Gan}{ruc}
  \icmlauthor{Yi Hao}{ruc}
  \icmlauthor{Yong Liu}{ruc}
  \end{icmlauthorlist}

  \icmlaffiliation{ruc}{Gaoling School of Artificial Intelligence, Renmin University of China, Beijing, China}

\icmlcorrespondingauthor{Yong Liu}{liuyonggsai@ruc.edu.cn}

  \icmlkeywords{test-time scaling, inference, large language models}

  \vskip 0.3in
]



\printAffiliationsAndNotice{}  

\begin{abstract}
Test-time scaling, primarily manifested through multi-step Chain-of-Thought (CoT) reasoning via Reinforcement Learning (RL), has emerged as a pivotal paradigm for enhancing the reasoning capabilities of Large Language Models (LLMs). However, a significant theoretical gap persists: traditional token-level analysis fails to capture the macroscopic dynamics of reasoning-level scaling. To address this, we introduce CoT-Space, a novel theoretical framework that recasts the reasoning process from a discrete token-prediction task to an optimization process within a continuous, reasoning-level semantic space. 
By modeling the reasoning trajectory from both noise and risk perspectives and revitalizing foundational principles from classical learning theory, we demonstrate that the observed convergence to an optimal CoT length is a natural consequence of the fundamental trade-off between underfitting and overfitting. We further utilize RL as a tool to elicit and verify these results in our experiments. Our findings provide a mechanistic explanation for the internal test-time scaling via RL, offering a principled theoretical foundation to optimize reasoning trajectories in modern LLMs. 
We open-source our code at \url{https://github.com/ZyGan1999/CoT-Space}. 
\end{abstract}

\section{Introduction}
\label{sec:introduction}

Improving the reasoning capabilities of Large Language Models (LLMs) is pivotal in modern AI research. The most well-known framework for this is Chain-of-Thought (CoT)~\citep{wei2022chain}, which elicits multi-step reasoning. A promising strategy to enhance CoT is ``test-time scaling''~\citep{chen2025empirical,jiang2024enhancing}, which can be broadly categorized into external and internal methods. External methods, such as tree-based strategies~\citep{yao2023tree,zhang2024rest,wan2024alphazero} and self-consistency~\citep{wang2022self}, augment the model's inference-time output 
via searching~\citep{gan2025rethinkingexternalslowthinkingsnowball}. 
While internal test-time scaling methods embed the reasoning capability directly into the model's inference process via post-training, typically utilizing Reinforcement Learning (RL) with a high-level outcome reward. 

Recent breakthroughs from models like DeepSeek's R1~\citep{deepseekai2025deepseekr1incentivizingreasoningcapability}, OpenAI's o1 and o3~\citep{openai2024reasoning,openai2025introduce-o3}, and Qwen's QwQ~\citep{qwq-32b-preview} have spotlighted the power of internal test-time scaling. This paradigm, also known as \textit{zero}-like training, involves applying RL directly to a pre-trained model, bypassing supervised tuning. The empirical results are compelling, showing that a high-level objective can guide a policy to discover an optimal strategy on its own. This reveals an interesting insight: the \textit{zero}-like training process is actually guiding the model to find an optimal inference policy. 

\begin{figure*}[tp]
    \centering
    \includegraphics[width=\linewidth]{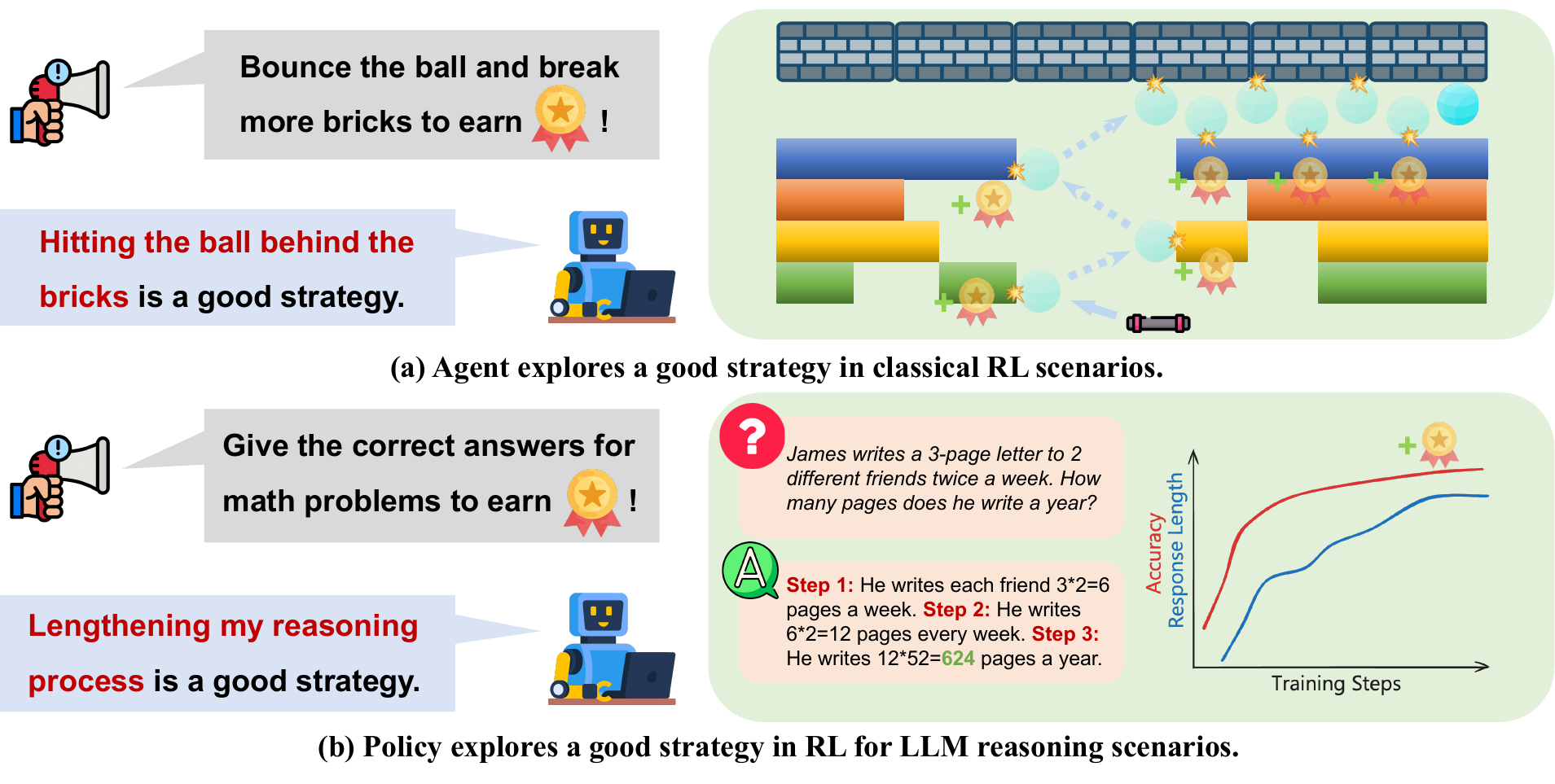}
    \caption{\textbf{Analogy between strategy discovery in classical RL and LLM reasoning.} (a) In classical RL, an agent with a high-level goal (e.g., break more bricks) discovers an effective strategy through exploration to maximize its reward. (b) Similarly, an LLM policy with the goal of providing correct answers autonomously learns that generating a suitable CoT is an effective strategy.} 
    \label{fig:internal-slow-thinking}
    \vspace{-10pt}
\end{figure*}

As illustrated in Figure~\ref{fig:internal-slow-thinking}, \textit{zero}-like training for LLM reasoning is analogous to the classical strategy optimization process. 
In a classic RL task like Bricks Breaker\footnote{Bricks Breaker is a video game, the goal is to destroy the bricks by shooting a ball at them.} shown in subgraph (a), the agent's goal is to break more bricks. Without explicit instructions on specific strategies, the agent may discover a highly effective method, such as hitting the ball behind the top row of bricks to achieve a higher score. 
Similarly, in the context of LLM reasoning, as shown in subgraph (b), the policy model is given the broad goal of providing a correct answer. As RL training progresses, the model converges on an optimal CoT length, demonstrating its ability to 
optimize via a simple, high-level reward signal. 


A variety of RL-based methods have been developed to improve \textit{zero}-like post-training, including GRPO~\citep{shao2024deepseekmath}, VC-PPO~\citep{yuan2025s}, DAPO~\citep{yu2025dapo}, and VAPO~\citep{yue2025vapo}. However, despite these empirical successes, our theoretical understanding of internal test-time scaling remains shallow~\citep{gan2026beyond}. For instance, the convergence on an optimal CoT length during training, also known as the ``overthinking'' phenomenon~\citep{sui2025stop}, lacks a clear mechanistic explanation.


This chasm between empirical success and theoretical understanding is alarming. It stands in stark contrast to old-school machine learning, where decades of research have forged a robust theoretical bedrock. Unfortunately, foundational principles 
appear to falter when applied to the complex, multi-step nature of LLM reasoning. This prompts the critical question that motivates our work: \textbf{Are these time-tested theories truly obsolete in this new era, or do we merely lack the conceptual framework to bridge them to the unique dynamics of LLM reasoning?} We argue for the latter, positing that by shifting our analytical perspective, we can revitalize these foundational theories and use them to build a principled understanding of LLM reasoning.

The central barrier to this goal is twofold. \textbf{First, a fundamental misalignment exists between token-level analytical frameworks and the reasoning-level nature of CoT}, where actions are complete thoughts rather than single tokens. \textbf{Second, a conceptual gap separates the discrete world of language from the continuous mathematics underpinning classic learning theory}, hindering the application of its powerful analytical tools. To resolve both challenges, we introduce \textbf{CoT-Space}, a novel reasoning-level theoretical framework for LLM reasoning via RL. Served as a conceptual framework, we aim to build a theoretical bridge that connects the empirical phenomena of LLM reasoning with foundational principles of classical machine learning. Our framework first defines a reasoning-level state space to align the analytical framework with CoT's structure. Subsequently, we prove this space converges to a continuous manifold, a key result that recasts reasoning as an optimization process. By leveraging this continuous perspective, we are then able to conduct analyses that explain the convergence of an optimal CoT length from both noise and risk perspectives. We reveal that this convergence is actually an underfitting-overfitting trade-off during CoT generations. 

The remainder of this paper is structured as follows. In Section~\ref{sec:cot-space}, we provide a systematic analysis of the misalignment between the current token-level inference formulation and the reasoning-level nature of CoT generations. Subsequently, we introduce our reasoning-level theoretical framework, CoT-Space. We then leverage this framework in Section~\ref{sec:analysis} to analyze the convergence of CoT length in the reasoning process from the perspective of noise and risk, demonstrating its potential theoretical value. We perform empirical validation of our theoretical insights in Section~\ref{sec:experiments}. Following this, we briefly review related works in Section~\ref{sec:related-work} and finally conclude the paper in Section~\ref{sec:conclusion}.

\section{CoT-Space: A Reasoning-Level Theoretical Framework}
\label{sec:cot-space}

This section presents a systematic analysis of CoT reasoning. We first discuss existing misalignment (Subsection~\ref{subsec:from-token-to-reasoning}), then introduce our reasoning-level framework, CoT-Space (Subsection~\ref{subsec:formulation-of-cot-space}), and finally prove its continuum nature to establish the theoretical foundation (Subsection~\ref{subsec:on-the-continuum-nature-of-the-reasoning-level-state-space}). 

\subsection{From Token-Level to Reasoning-Level Perspective}
\label{subsec:from-token-to-reasoning}


The conventional framework for LLM reasoning is fundamentally misaligned with the reasoning-level nature of CoT scenarios. This misalignment stems from modeling auto-regressive language generation as a token-level Markov Decision Process (MDP)~\citep{setlur2025scaling}: $\mathcal{M}(\mathcal{S}, \mathcal{A}, r, H)$, where $\mathcal{S}$ is the state space of existing token sequences, $\mathcal{A}$ is the action space representing the token dictionary, $r$ is the reward function guiding generation, and $H$ is the token budget. In this standard formulation, a policy $\pi: \mathcal{S} \rightarrow \mathcal{A}$ produces the next token $a=\pi(s)$ to extend the current sequence $s$. While inherited from classic RL tasks, this formulation presents a significant challenge regarding the token-level state space $\mathcal{S}$ in natural language, as we assert next (details in Appendix~\ref{app:limitation-of-natural-language}):

\begin{assertion}\label{assertion:huge-and-discontinuous}
The token-level state space $\mathcal{S}$ of natural language exhibits \textbf{exponential growth} and \textbf{discrete topology}. 
\end{assertion} 



To address this, we propose shifting the analytical perspective from the microscopic token level to the macroscopic reasoning level. We posit that while the token-level state space is discrete and sparse, the reasoning-level state space, which is formed by abstract thoughts, can be approximated as a continuous manifold. This is analogous to interstellar travel: while individual stars (tokens) are discrete, the galaxy (semantic space) appears continuous from a distance. This ``zoom-out'' perspective allows us to apply continuous optimization tools to analyze the reasoning process, as discussed in Figure~\ref{fig:space-by-diff-scale}.

\subsection{Formulation of CoT-Space}
\label{subsec:formulation-of-cot-space}

\begin{figure}[t]
    \centering
    \includegraphics[width=\linewidth]{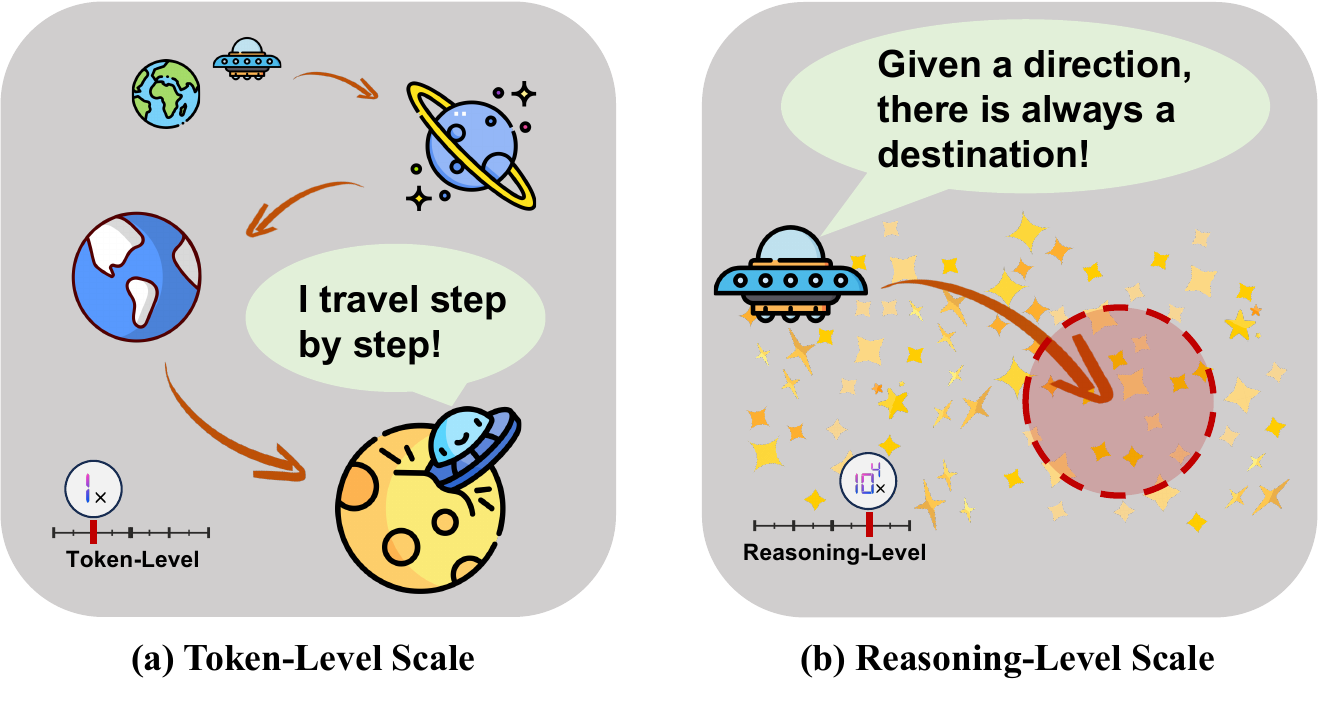}
    \caption{\textbf{Token-level vs. Reasoning-level perspectives.} (a) The token-level view treats the generation process as a path through discrete states. (b) The reasoning-level view zooms out, approximating the dense state space as continuous.}
    \label{fig:space-by-diff-scale}
    \vspace{-10pt}
\end{figure}



To formalize this conceptual ``zoom out'' framework, we define the following constructs. Note that these definitions rely on abstract semantic equivalence, serving as theoretical tools rather than computable functions. 

Based on the CoT setting, where the output is generated step-by-step and consists of multiple intermediate thoughts, a reasoning state is defined as follows. 

\begin{definition}\label{def:reasoning-state}
    (CoT reasoning state.) Given a query $q$ and a set of intermediate abstract steps $\bm{\xi}_{o}=[\xi_1,\xi_2,\dots,\xi_t]$, the current reasoning state is defined as $s_o=(q,\bm{\xi}_o)$. 
\end{definition}

Definition~\ref{def:reasoning-state} establishes the concept of reasoning-level state. 
We then introduce a special class of states that represent successful problem solutions, which we call \textit{minimums}. 

\begin{definition}\label{def:minimums-in-reasoning-space}
    (Minimums in reasoning space.) Given a query $q$ with a definite golden answer $\phi$. Let $\Xi_q$ be the set of all reasonable intermediate reasoning processes for query $q$. Then the minimums of the reasoning space are defined as $\bm{M}_{q} = \left\{ (q,\bm{\xi},\phi)~|~\bm{\xi} \in \Xi_q \right\}$. 
\end{definition}

By Definition~\ref{def:minimums-in-reasoning-space}, minimums are states where the query has been successfully solved. For any intermediate reasoning state, we can then define which minimums are reachable. 

\begin{definition}\label{def:reachable-minimums}
    (Reachable minimums.) Given a query $q$ with a definite golden answer $\phi$, and an incomplete set of intermediate reasoning steps $\bm{\xi}_{o}=[\xi_1,\xi_2,\dots,\xi_t]$. The reachable minimums are defined as $\bm{R}^{o}_{q}=\left\{(q,\bm{\xi},\phi)~|~\bm{\xi}_{o} \sqsubseteq \bm{\xi}~\text{and}~\bm{\xi} \in \Xi_q \right\}$, where $a \sqsubseteq b$ represents that $a$ is prefix of $b$. 
\end{definition} 
Definition~\ref{def:reachable-minimums} establishes the concept of reachable minimums. This concept represents the set of successful reasoning paths that can be completed from the current state. We then quantify the distance between a state and its reachable minimums and define the nearest minimum. 

\begin{definition}\label{def:reachable-distance-and-nearest-reachable-minimum}
    (Reachable distance and nearest reachable minimum.) Given a query $q$ with a definite golden answer $\phi$, and a reasoning state $s_o$ on this query, the reachable distance between the $s_o$ and a reachable minimum is defined as $\operatorname{dist}(s_o, m)=|\bm{\xi}|-|\bm{\xi}_o|$, where $m \in \bm{R}^{o}_{q}$, $\bm{\xi}$ and $\bm{\xi}_o$ are intermediate steps of $m$ and $s_o$ respectively, $|\cdot|$ represents the length of a vector. The nearest minimum is defined as $m^{*}_{o}={\arg\min}_{m \in \bm{R}^{o}_{q}} \, \operatorname{dist}(s_{o},m)$. 
\end{definition}
Definition~\ref{def:reachable-distance-and-nearest-reachable-minimum} quantifies the number of reasoning steps required to reach the closest successful solution from a given state. Finally, we define a reasoning loss to characterize the gap between the current state and the optimal solution. 

\begin{definition}\label{def:reasoning-loss}
    (Reasoning loss.) Given a query $q$ with a definite golden answer $\phi$, and a reasoning state $s_o$ on this query, there exists a loss function $C$ that quantifies the distance of $s_o$ towards its nearest reachable minimum, which satisfies: for two different states $s_i, s_j$, $C(s_i,q) < C(s_j,q)$ if and only if $\operatorname{dist}(s_i,m^{*}_{i}) < \operatorname{dist}(s_j,m^{*}_{j})$. Specially, $\forall m \in \bm{M}_{q}, C(m,q)=0$. 
\end{definition}

Definition~\ref{def:reasoning-loss} conceptualizes the reasoning process as an optimization process. The function $C(\cdot,\cdot)$ acts as a proxy for the distance to a correct solution, where a lower loss indicates a more advanced stage of reasoning. 
Taking Figure~\ref{fig:cot-space}(a) as instance, in this transition, $m_i, m_j$ and $m_k$ are all reachable minimums for the current state $s_0$. However, the possible next states of $s_0$, which are denoted as $s_i, s_j$ and $s_k$ (Here we refer to the three realizations $s_{i_1}, s_{i_2}$ and $s_{i_3}$ all as $s_i$ for simplicity). These intermediate states differ in their distance to its corresponding minimums ($\operatorname{dist}(s_i, m_i) < \operatorname{dist}(s_j, m_j) < \operatorname{dist}(s_k, m_k)$). Given the same query $q$, the reasoning losses thus satisfy $C(s_i,q) < C(s_j,q) < C(s_k,q)$. 



With this complete reasoning-level analytical framework established, we resolve the misalignment between token-level RL and the reasoning-level CoT paradigm. 
We now turn to prove the continuum nature of the reasoning-level state space, thus yielding the rationality of CoT-Space. 

\begin{figure*}[tp]
    \centering
    \includegraphics[width=\linewidth]{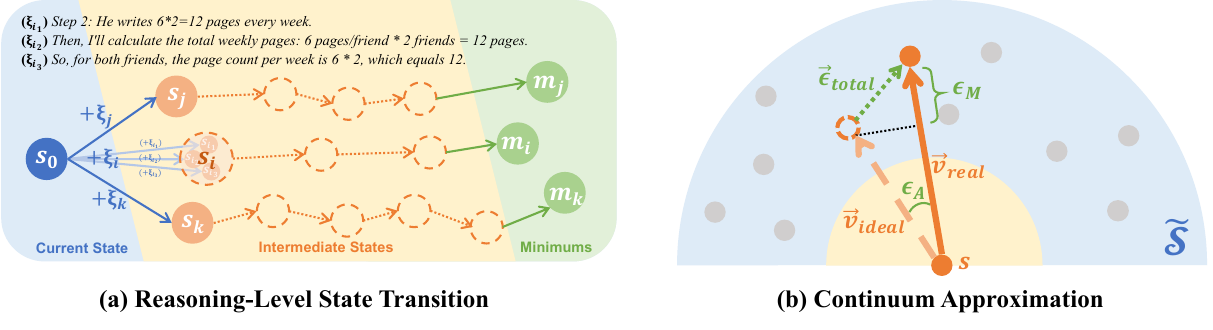}
    \caption{\textbf{Illustration of CoT-Space.} (a) Different reasoning steps ($\xi_i, \xi_j, \xi_k$) lead to different minimums ($m_i, m_j, m_k$), while one step can have multiple token-level realizations ($\xi_{i_1}, \xi_{i_2}, \xi_{i_3}$). (b) The discrete state space $\mathcal{S}$ (dots) is approximated as a continuous manifold $\tilde{\mathcal{S}}$ (blue area), where the real optimization vector ($\vec{v}_\text{real}$) approximates the ideal one ($\vec{v}_\text{ideal}$).} 

    \label{fig:cot-space}
    \vspace{-10pt}
\end{figure*}

\subsection{On the Continuum Nature of the Reasoning-Level State Space}
\label{subsec:on-the-continuum-nature-of-the-reasoning-level-state-space}



The essence of reasoning lies in the semantic meaning of conceptual steps, not in the specific tokens used to express them. A single reasoning step, such as ``calculating the total weekly pages'' 
can be realized by a vast number of distinct token sequences $\xi_{i_1}, \xi_{i_2}$ and $\xi_{i_3}$, as shown in Figure~\ref{fig:cot-space}(a). 

From token-level perspective, different token sequences correspond to distant, discrete states. However, in a higher-dimensional semantic space, they are nearly identical. 
The CoT-Space framework posits that as the reasoning length increases, the number of semantically meaningful states grows so rapidly that the space they inhabit can be treated as a continuous manifold. This density arises from the expressive redundancy of language: the number of valid token-level paths to achieve a reasoning goal grows exponentially with the available token budget.

To formalize this, we introduce the following definitions and a key assumption. 

\begin{definition}\label{def:semantic-equivalence-set}
(Semantic equivalence set.) For an abstract reasoning step $\xi_l$, its semantic equivalence set $\mathcal{V}(\xi_l, k)$ is the collection of all token sequences $\tau \in \mathcal{A}^{k}$ with length $k$ that are semantically equivalent to $\xi_l$, defined as: 
$
    \mathcal{V}(\xi_l, k) = \{ \tau \in \mathcal{A}^{k} \ | \ \text{Decode}(\tau) \equiv \xi_l \}, 
$
where $\mathcal{A}^{k}$ is the space of all token sequences with length $k$, and $\text{Decode}(\cdot)$ is an abstract function mapping a token sequence to its semantic meaning. 
\end{definition}

\begin{definition}\label{def:reasoning-state-density}
(Reasoning state density.) Within a fixed semantic volume $\mathcal{V}_{\text{semantic}}$, the set of all valid, reachable reasoning states of reasoning token amount $K$ is denoted by $\mathcal{S}_{\text{reasoning}}^{(K)}$. The state density $\rho(K)$ is defined as the number of states per unit of semantic volume: 
$
    \rho(K) = |\mathcal{S}_{\text{reasoning}}^{(K)}|/|\mathcal{V}_{\text{semantic}}|. 
$
\end{definition}

To substantiate our claim that this density grows exponentially, we propose the following assumption.

\begin{assumption}\label{asm:exponential-expressive-redundancy}
(Exponential expressive redundancy of language.) For any non-trivial reasoning step $\xi_l$, the size of its semantic equivalence set $|\mathcal{V}(\xi_l, k)|$, when realized by $k$ tokens, grows exponentially with $k$. That is, there exists a constant $c > 1$ such that:
$
    |\mathcal{V}(\xi_l, k)| \ge c^k. 
$
\end{assumption}
We provide a proof for this assumption via the Kolmogorov Complexity from both a construction perspective and an information-theoretic perspective in Appendix~\ref{app:proof-of-exponential-expressive-redundancy}. 
This assumption acts as a foundational theoretical posit for our framework, grounded in the combinatorial nature of language where adding tokens exponentially increases the space of semantically equivalent expressions. 

Based on these foundations, we can now formally state the lemma that justifies the continuum approximation of the reasoning-level state space.


\begin{lemma}\label{lm:continuum-convergence-of-reasoning-level-state-space}
(Continuum convergence of reasoning-level state space.)
Under Assumption~\ref{asm:exponential-expressive-redundancy}, in a $D$-dimensional semantic space, 
the reasoning-level state density, denoted as $\rho(K)$, increases exponentially with the reasoning token amount $K$, formally $\rho(K) = \Theta(c^K)$. 
Let $s_i \bowtie s_j$ denotes that $s_i, s_j$ are nearest neighbor states of each other, 
the expected nearest neighbor distance of the semantic manifold is inversely related to the reasoning state density, formally $\mathbb{E}_{s_i \bowtie s_j}[\operatorname{dist}(s_i, s_j)] \propto \rho(K)^{-1/D}$. 
\end{lemma}

The proof is provided in Appendix~\ref{app:proof-of-continuum-convergence-of-reasoning-level-state-space}. 
In Lemma~\ref{lm:continuum-convergence-of-reasoning-level-state-space}, we established that the reasoning-level state space converges to a continuous semantic manifold in a macroscopic limit. 
Here, we further quantify the total ``gap'' between an ideal continuous update and the best possible discrete step, demonstrating that this gap vanishes as the total reasoning token amount $K$ increases. 

A visualization for this analysis is presented in Figure~\ref{fig:cot-space}(b). 
Consider the original discrete reasoning state space $\mathcal{S}$ and its corresponding continuous manifold $\tilde{\mathcal{S}}$, where $\mathcal{S} \subset \tilde{\mathcal{S}}$. 
For a state transition starting from $s$, 
let $\vec{v}_{\text {ideal }}$ be the ideal optimization vector with domain on $\tilde{\mathcal{S}}$, and let $\vec{v}_{\text {real }}$ be the best achievable discrete optimization vector with domain on $\mathcal{S}$. 
Since $\vec{v}_{\text {real }}$ serves as an approximation for $\vec{v}_{\text {ideal }}$, 
the total vector error is $\vec{\epsilon}_{\text {total }}(s)=\vec{v}_{\text {real }}-\vec{v}_{\text {ideal }}$. 
We can decompose the analysis of this error into two components: the error in direction (angle) and the error in size (magnitude).


\begin{definition}\label{def:continuum-errors}
(Continuum errors.) 
The total error for continuum approximation is characterized by two components: \textbf{(1) The angular error}, $\epsilon_A(s)$, defined as the angle between $\vec{v}_{\text {ideal }}$ and $\vec{v}_{\text {real }}$. \textbf{(2) The magnitude error}, $\epsilon_M(s)$, defined as difference in norms: $\epsilon_M(s) = \left|\left\|\vec{v}_{\text {real }}\right\|-\left\|\vec{v}_{\text {ideal }}\right\| \right|$.
\end{definition}

We will further show the convergence relationship between the continuum errors and the density, thereby revealing the rationality of the reasoning-level state space continuous approximation. 

\begin{theorem}\label{thm:convergence-of-continuum-errors}
(Convergence of the continuum error). The expected angular error $\mathbb{E}\left[\epsilon_A(s)\right]$ and expected magnitude error $\mathbb{E}\left[\epsilon_M(s)\right]$ are both upper-bounded by a function that is inversely related to the reasoning state density $\rho(K)$, formally 
$
\mathbb{E}[\epsilon_A(s)] \le \mathcal{O}\left(\rho(K)^{-\frac{1}{D}}\right)$ and $\mathbb{E}[\epsilon_M(s)] \le \mathcal{O}\left(\rho(K)^{-\frac{1}{D}}\right). 
$
\end{theorem}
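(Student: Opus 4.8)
The plan is to reduce both error components to a single scalar quantity---the length of the total vector error $\vec{\epsilon}_{\text{total}}(s) = \vec{v}_{\text{real}} - \vec{v}_{\text{ideal}}$---and then to control that length using the nearest-neighbor scaling already established in Theorem~\ref{thm:continuum-convergence-of-reasoning-level-state-space}. Write $\delta(K) := \rho(K)^{-1/D}$ for brevity; by the second half of Theorem~\ref{thm:continuum-convergence-of-reasoning-level-state-space}, the typical spacing between adjacent discrete states on the manifold is $\Theta(\delta(K))$.

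The first step is to bound $\mathbb{E}\big[\|\vec{\epsilon}_{\text{total}}(s)\|\big]$. Fix the start state $s$ and let $s^{\dagger}$ be the target point on the continuous manifold $\tilde{\mathcal{S}}$ reached by the ideal update $\vec{v}_{\text{ideal}}$. Since $\mathcal{S} \subset \tilde{\mathcal{S}}$ has density $\rho(K)$, there is a discrete state whose distance to $s^{\dagger}$ is, in expectation, at most $\mathcal{O}(\delta(K))$. Because $\vec{v}_{\text{real}}$ is \emph{by definition} the best achievable discrete optimization vector from $s$, the endpoint of $\vec{v}_{\text{real}}$ is no farther from $s^{\dagger}$ than that particular discrete state; working in the local Euclidean approximation of the manifold, $\|\vec{v}_{\text{real}} - \vec{v}_{\text{ideal}}\|$ equals the distance between the two endpoints, and hence $\mathbb{E}\big[\|\vec{\epsilon}_{\text{total}}(s)\|\big] \le \mathcal{O}(\delta(K))$.

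Given this, the magnitude error is immediate: the reverse triangle inequality gives $\epsilon_M(s) = \big| \|\vec{v}_{\text{real}}\| - \|\vec{v}_{\text{ideal}}\| \big| \le \|\vec{v}_{\text{real}} - \vec{v}_{\text{ideal}}\|$, so $\mathbb{E}[\epsilon_M(s)] \le \mathcal{O}(\delta(K))$. For the angular error, I would decompose $\vec{v}_{\text{real}}$ into components parallel and orthogonal to $\vec{v}_{\text{ideal}}$; the orthogonal part has norm $\|\vec{v}_{\text{real}}\| \sin \epsilon_A(s)$, which is at most $\|\vec{v}_{\text{real}} - \vec{v}_{\text{ideal}}\|$. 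Invoking a uniform lower bound $\|\vec{v}_{\text{real}}\| \ge v_{\min} > 0$ on the magnitude of a non-trivial reasoning update (one additional reasoning step always advances the state by a bounded-below semantic amount), together with Jordan's inequality $\theta \le \tfrac{\pi}{2}\sin\theta$ on $[0,\tfrac{\pi}{2}]$, we obtain $\epsilon_A(s) \le \tfrac{\pi}{2 v_{\min}} \|\vec{v}_{\text{real}} - \vec{v}_{\text{ideal}}\|$, and taking expectations yields $\mathbb{E}[\epsilon_A(s)] \le \mathcal{O}(\delta(K))$. Substituting $\delta(K) = \rho(K)^{-1/D}$ gives the claimed bounds.

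The step I expect to be the main obstacle is the second one---justifying that the best achievable discrete vector is at least as good as the nearest discrete neighbor of $s^{\dagger}$, and that the expected-nearest-neighbor estimate of Theorem~\ref{thm:continuum-convergence-of-reasoning-level-state-space} transfers from a \emph{random} pair of neighbors to the \emph{specific} target $s^{\dagger}$. This requires a mild homogeneity assumption: the discrete reasoning states populate the manifold with roughly uniform density near $s^{\dagger}$, so that no region is exponentially underpopulated relative to the global average; I would state this explicitly or absorb it into the hidden constant of the $\mathcal{O}(\cdot)$. A secondary, milder subtlety is the lower bound $v_{\min}$ on $\|\vec{v}_{\text{real}}\|$, which is what converts a small absolute orthogonal error into a small angle; this should follow from modeling a single reasoning step as having non-vanishing semantic magnitude, but it enters as an assumption rather than as a consequence of Assumption~\ref{asm:exponential-expressive-redundancy} or of Theorem~\ref{thm:continuum-convergence-of-reasoning-level-state-space}.
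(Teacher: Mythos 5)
Your proof is correct and follows essentially the same route as the paper's: bound the endpoint displacement $\|\vec{v}_{\text{real}} - \vec{v}_{\text{ideal}}\|$ by the nearest-neighbor scale $\rho(K)^{-1/D}$, then extract the magnitude error via the reverse triangle inequality and the angular error via a small-angle geometric argument that requires a lower bound on $\|\vec{v}_{\text{real}}\|$. The paper packages the worst-case distance through an intermediate ``maximal void'' radius $R_{\text{hole}}$ (Definition~\ref{def:maximal-void}, Lemma~\ref{lm:void-radius-and-density}) and supplies your $v_{\min}$ as $R_{\min}$ in Assumption~\ref{asm:bounded-step-radius}, so the two gaps you flag at the end are precisely the assumptions the paper states explicitly; your version is a streamlined rendering of the same argument.
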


The proof is provided in Appendix~\ref{app:proof-of-convergence-of-continuum-errors}. Theorem~\ref{thm:convergence-of-continuum-errors} demonstrates that the total error in approximating a true gradient vector with the best possible discrete step is bounded and converges to zero as the model's expressive capacity ($K$) increases. By ensuring that for a sufficiently large $K$, a discrete step vector $\vec{v}_{\text {real }}$ that is an arbitrarily accurate approximation of the ideal vector $\vec{v}_{\text {ideal }}$ can always be found. 
With the continuum nature of the reasoning space established, we bridge the gap between discrete language generation and classical results. 
Specifically, it allows us to conceptualize a smooth reasoning loss landscape and thus define a corresponding gradient, enabling us to model the generation of a new reasoning step as a descent-like update. 


Building upon this, we can frame the entire reasoning process as a trajectory in a continuous state space, and link many LLM phenomena, such as hallucination, prompt sensitivity, emergent abilities, and effectiveness of external slow-thinking strategies, with classical theoretical results. A brief discussion is provided in Appendix~\ref{app:broader-exp-of-cot-space}. In the next section, we take the ``overthinking'' phenomenon as a detailed example to reveal the potential of CoT-Space for revitalizing the classical theoretical results. 




\section{Analytical Applications within CoT-Space}
\label{sec:analysis}

This section utilizes the CoT-Space framework and combines classic learning theories to analyze the ``overthinking'' phenomenon, thus yielding the potential of CoT-Space. We provide a theoretical foundation by examining the issue through two lenses: optimization noise (Subsection~\ref{subsec:perspective-of-noise}) and reasoning risk (Subsection~\ref{subsec:perspective-of-risk}), followed by a summary of our theoretical insights (Subsection~\ref{subsec:takeaways}). 


\subsection{Reasoning as Optimization: A Perspective of Noise}
\label{subsec:perspective-of-noise}
\begin{figure*}[tp]
    \centering
    \includegraphics[width=\linewidth]{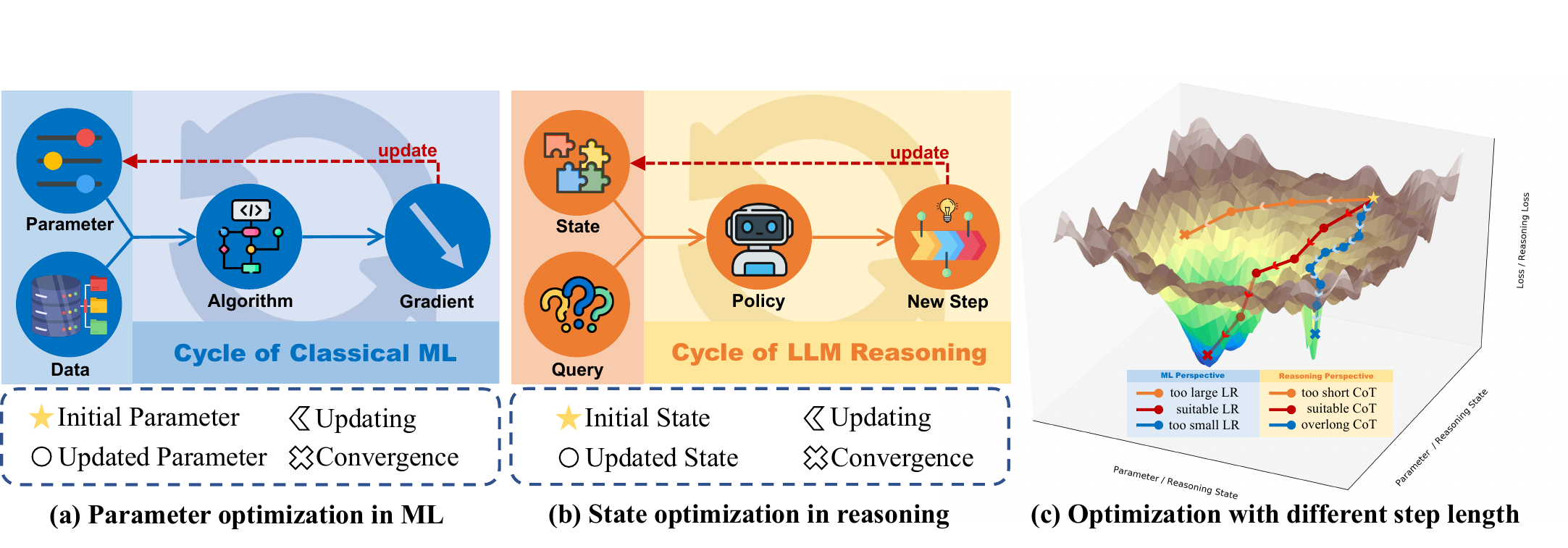}
    \caption{\textbf{Modeling LLM reasoning as an optimization process.} (a) Traditional ML performs parameter optimization by updating model weights. (b) Our framework models reasoning as state optimization, where new reasoning steps iteratively update the current state. (c) This panel visualizes this process on a reasoning loss landscape, where the CoT length is analogous to a learning rate; an optimal value is crucial for converging to a high-quality, generalizable solution.} 
    \label{fig:reasoning-as-optimization}
    \vspace{-10pt}
\end{figure*}

Leveraging the continuous nature of CoT-Space, we frame LLM reasoning as an optimization process analogous to ML, as shown in Figure~\ref{fig:reasoning-as-optimization}. In this analogy, an LLM policy iteratively refines its reasoning state towards a solution, much like an optimizer updates parameters. 
This perspective allows us to apply further analysis via noise. 


To characterize the random exploration during inference time, we model the optimization of LLM reasoning as a noisy reasoning loss descent process. Specifically, we treat the step-by-step generation of a CoT with length $L$ as a discretization of a continuous stochastic process. We formulate this mathematically as a Stochastic Differential Equation (SDE) on the continuous manifold (details in Appendix~\ref{app:proof-of-g-inverse-prop-L}), where $\sigma$ represents the noise scale of the optimization trajectory. 
To theoretically derive the relationship between $\sigma$ and $L$, we bridge the discrete and continuous perspectives by equating the noise variance of a single reasoning step, and give the following theorem as an intuitive result. 

\begin{theorem}\label{thm:g-inverse-prop-L}
    (Relationship between noise scale and CoT length.) In a given optimization space, the noise scale $\sigma$ is inversely proportional to the reasoning length $L$, i.e. $\sigma \propto \frac{1}{L}$. 
\end{theorem}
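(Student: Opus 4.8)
The plan is to model one reasoning step as a single stochastic gradient step on the reasoning loss $C$, and to extract the per-step noise scale $g$ by comparing the total ``displacement'' achieved by a CoT of length $L$ against the displacement achieved by an idealized single update. The key structural observation is that, by Definition~\ref{def:reasoning-loss} together with Definition~\ref{def:reachable-distance-and-nearest-reachable-minimum}, the distance-to-solution decreases by (essentially) one unit of reasoning distance per step; so a CoT of length $L$ that reaches a minimum must traverse a fixed total amount of ``loss descent'' (the initial value $C(s_0)$) partitioned into $L$ increments. If we write the $k$-th step as $s_{k} = s_{k-1} + \alpha(-\nabla C(s_{k-1}) + \text{noise})$ in the continuum manifold $\tilde{\mathcal{S}}$ licensed by Theorem~\ref{thm:continuum-convergence-of-reasoning-level-state-space}, then the effective per-step learning rate $\alpha$ must scale like $1/L$, since $L$ steps compose to a fixed total update.

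From there I would carry out the following steps. First, formalize the one-step update and identify the ``signal'' part (the deterministic descent toward $m^{*}_{o}$) and the ``noise'' part (the choice among the many semantically-near next states — the cloud $s_{i_1}, s_{i_2}, s_{i_3}$ collapsing to $\xi_i$ in Figure~\ref{fig:cot-space}(a), i.e. the stochasticity induced by sampling a token-level realization of an abstract step). Second, define the noise scale $g$ as the ratio of the noise covariance magnitude to the batch/step granularity, mirroring the classical SGD noise-scale definition $g = \alpha(N/B - 1)$ or more simply $g \propto \alpha$ per step; here the role of ``how much of the total trajectory one step covers'' is played by $1/L$. Third, use the fixed-total-descent budget: $\sum_{k=1}^{L} \Delta C_k = C(s_0)$ with each $\Delta C_k = \Theta(\alpha \|\nabla C\|^2)$ on average, so $\alpha = \Theta(1/L)$. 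Fourth, substitute into the noise-scale expression to conclude $g \propto 1/L$. I would keep the step size of the underlying token-level walk fixed (each step adds a bounded number of tokens), so that a longer CoT genuinely means a finer-grained, lower-variance-per-unit-progress walk, which is exactly the statement that noise per effective step shrinks as $L$ grows.

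The main obstacle I anticipate is making the identification ``effective learning rate $= \Theta(1/L)$'' rigorous rather than heuristic: it requires that the total descent $C(s_0)$ be (on average) independent of $L$ — i.e., that different-length CoTs for the same query start from comparable reasoning loss and end at a minimum — and that each step contributes a roughly equal share of the descent. This is plausible under Definition~\ref{def:reasoning-loss} (monotone in distance) and the near-uniformity of the manifold from Theorem~\ref{thm:continuum-convergence-of-reasoning-level-state-space} (nearest-neighbor distances are homogeneous, so step sizes are comparable), but pinning down the ``equal share'' claim cleanly is where the real work lies; a clean route is to argue it in expectation over the randomness of which realization is sampled, invoking Theorem~\ref{thm:convergence-of-continuum-errors} to control the deviation of each discrete step from the ideal continuum descent direction. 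A secondary, lesser obstacle is dimensional bookkeeping: ensuring the proportionality $g \propto 1/L$ is stated with all other quantities (ambient dimension $D$, the redundancy constant $c$, the query-dependent $C(s_0)$) held fixed, so that the claimed relationship is a genuine $L$-dependence and not an artifact of co-varying hidden parameters.
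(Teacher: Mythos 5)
Your high-level plan matches the paper's: cast an $L$-step CoT as $L$ gradient-like updates of size $\Theta(1/L)$ on the reasoning loss $C$, and then read off that the per-step noise scale $g$ must shrink like $1/L$. But the core technical move in the paper is one you elide. The paper does not simply \emph{define} $g \propto \alpha$ by analogy to the classical SGD noise scale; it \emph{derives} $g = 1/L$ by writing the reasoning process twice --- once as a discrete update $\Delta s = -\tfrac{1}{L}\dd\hat{C}/\dd s$ whose estimation-error term has variance $F(s)/L^2$, and once as an SDE $\dd s/\dd t = -\dd C/\dd s + \eta(t)$ with white noise $\langle \eta(t)\eta(t')\rangle = gF(s)\delta(t-t')$, whose integrated noise over a step of duration $\Delta t = 1/L$ has variance $gF(s)/L$ --- and then equating the two variances. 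That variance-matching calculation is precisely what makes ``noise scale $\propto$ step size'' a consequence rather than an imported assumption, and it is the step your sketch replaces with ``mirroring the classical SGD noise-scale definition \ldots or more simply $g \propto \alpha$.'' Without redoing that calculation in the CoT-Space setting, the proportionality is asserted, not proved; the Smith--Le formula $g = \alpha(N/B-1)$ was itself obtained by exactly this kind of discrete-vs-SDE variance comparison, in a different setting, so citing it does not discharge the obligation here.

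Conversely, you also \emph{mis-locate the hard part}. You flag ``making $\alpha = \Theta(1/L)$ rigorous'' as the main obstacle and devote most of your argument (fixed total-descent budget, equal per-step shares, appeals to Theorems~\ref{thm:continuum-convergence-of-reasoning-level-state-space} and~\ref{thm:convergence-of-continuum-errors}) to it. But the paper treats the $1/L$ step size as a modeling choice baked into the discretization (its Eq.~for $\Delta s$), not something to be derived, and spends its effort on the variance matching instead. Your descent-budget heuristic is a reasonable complementary justification for \emph{why} $1/L$ is the right discretization, and it is not wrong, but it substitutes for the wrong missing piece. To align with the paper's proof you should (i) state the SDE model with its diffusion coefficient $g$ as the operational definition of noise scale, (ii) compute the one-step noise variance in both the discrete and continuous pictures, and (iii) equate them --- the rest of your scaffolding can then be compressed to the single sentence ``discretize in $L$ equal steps.''
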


The proof is provided in Appendix~\ref{app:proof-of-g-inverse-prop-L}. 
Theorem~\ref{thm:g-inverse-prop-L} clarifies the relationship between the noise $\sigma$ and the CoT length $L$ in the reasoning process. 
According to the sharpness-aware minimization principles~\citep{keskar2016large}, given a reasoning task, there must exist an optimal noise scale $\sigma_\text{opt}$, which corresponds to an optimal CoT length $L_\text{opt}\propto \frac{1}{\sigma_\text{opt}}$. In essence, Theorem~\ref{thm:g-inverse-prop-L} provides a fundamental basis for why an optimal CoT length is crucial for effective reasoning, highlighting its role in introducing the right amount of noise to achieve a flat, high-quality, and generalizable solution. 



\subsection{Reasoning and Generalization: From Noise to Risk}
\label{subsec:perspective-of-risk}
The previous section fundamentally considered reasoning from the perspective of noise. 
We now provide a deeper analysis to link this noise perspective with the risk in CoT reasoning. By modeling the reasoning process as a stochastic dynamical system, we derive an information-theoretic generalization bound that explicitly characterizes the trade-off between optimization drift and information complexity. 

\subsubsection{Stochastic Dynamics of Reasoning}
We formalize the generation of a CoT sequence as a discrete-time approximation of a continuous stochastic process on a semantic manifold $\mathcal{M} \subseteq \mathbb{R}^d$. Let $s_t \in \mathbb{R}^d$ denote the reasoning state at step $t$, and $q \sim \mathcal{D}$ be the input query drawn from a data distribution $\mathcal{D}$. The Reasoning Loss $C(s, q): \mathbb{R}^d \times \mathcal{Q} \to \mathbb{R}$ is the semantic distance between the current state $s$ and the ground-truth solution for query $q$. 

To visually illustrate our conclusions, we performed a more refined discretization model of SDE for equation~(\ref{eq:reasoning-SDE}) as follows. It's worth noting that we can easily obtain similar conclusions under continuous settings. 
$$
    s_t = s_{t-1} - \eta \nabla C(s_{t-1}, q) + \sigma \zeta_t, \quad \zeta_t \sim \mathcal{N}(0, I_d),
$$
where $\eta > 0$ is the step size, and $\sigma \ge 0$ represents the Reasoning Noise Scale. The drift term $-\nabla C(s_{t-1}, q)$ drives the reasoning process towards semantic correctness, while the diffusion term $\sigma \zeta_t$ introduces necessary exploration. 

To rigorously analyze the impact of reasoning noise, we first formally distinguish between the deterministic (greedy) reasoning trajectory and the stochastic (exploratory) reasoning trajectory derived from our SDE framework, following the settings of~\citet{wang2022on}. 

\begin{definition}\label{def:noise-free-trajectory}
    (Noise-free trajectory.) Let $\mathcal{T}^* = \{s_0^*, s_1^*, \dots, s_T^*\}$ denote the deterministic reasoning trajectory generated when the noise scale is set to zero ($\sigma = 0$). This trajectory corresponds to a greedy decoding process or a standard gradient descent path in the semantic space, governed by the recurrence:
    $$
        s_t^* = s_{t-1}^* - \eta \nabla C(s_{t-1}^*, q), \quad \text{with } s_0^* = s_{initial}.
    $$
\end{definition}
We refer to $s_T^*$ as the Deterministic Baseline Solution, which serves as the reference point for our stability analysis. 

\begin{definition}\label{def:noisy-trajectory}
    (Noisy trajectory.) Let $\mathcal{T} = \{s_0, s_1, \dots, s_T\}$ denote the stochastic reasoning trajectory generated under a non-zero noise scale $\sigma > 0$. This trajectory evolves according to the full SDE discretization:
    $$
        s_t = s_{t-1} - \eta \nabla C(s_{t-1}, q) + \sigma \zeta_t, \quad \text{with } s_0 = s_{initial},
    $$
    where $\zeta_t \sim \mathcal{N}(0, I_d)$ represents the independent Gaussian noise injected at each step. 
\end{definition}
Definitions~\ref{def:noise-free-trajectory} and \ref{def:noisy-trajectory} conceptualize the role of noise in the reasoning process. We subsequently analyze the reasoning risk in the following subsections. 

\subsubsection{Reasoning Risk Analysis via Noise}
A core challenge in reasoning is avoiding overfitting to the superficial patterns of a specific prompt (i.e., prompt sensitivity), which leads to poor generalization on unseen queries. To quantify this, we introduce the concept of Thought Dispersion, inspired by the gradient dispersion in SGD analysis. 

\begin{definition}
    (Thought dispersion.) The Thought Dispersion $\mathbb{V}_t(s)$ at state $s$ quantifies the variance of the reasoning direction induced by different queries relative to the expected semantic drift: 
    $$
        \mathbb{V}_t(s) \triangleq \mathbb{E}_{q \sim \mathcal{D}} \left[ \left\| \nabla C(s, q) - \mathbb{E}_{q' \sim \mathcal{D}}[\nabla C(s, q')] \right\|_2^2 \right]. 
    $$
\end{definition}
Intuitively, a high $\mathbb{V}_t(s)$ indicates that the reasoning step is highly sensitive to the specific instance $q$, implying a higher risk of memorizing instance-specific shortcuts rather than learning robust reasoning patterns. 
We now state our main theorem, which bounds the expected population risk $R_{pop} \triangleq \mathbb{E}_{q, \zeta}[C(s_T, q)]$ in terms of the noise scale $\sigma$. 

\begin{theorem}\label{thm:noise-generalization-trade-off}
    (Noise-Generalization trade-off.) Assume the reasoning loss function $C(\cdot, q)$ is $\Gamma$-subguassian and $\beta$-smooth. For a reasoning process of length $L$, the expected population risk is upper-bounded by: 
    $$
        \tiny \mathbb{E}(R_{pop}) \le \underbrace{R_{emp}^* + \frac{\beta L \sigma^2 d}{2}}_{\text{(I) Optimization Drift}} + \underbrace{\sqrt{\frac{2\Gamma^2}{n} \sum_{t=1}^{L} \frac{d}{2} \log \left( 1 + \frac{\eta^2 \mathbb{E}[\mathbb{V}_t(s_{t-1})]}{d \sigma^2} \right)}}_{\text{(II) Information Complexity}},
    $$
    where $n$ is the number of training examples, $R_{emp}^*$ is the empirical risk of the deterministic (greedy) path, and the expectation in Term (II) is taken over the reasoning trajectory.
\end{theorem}

The proof is provided in Appendix~\ref{app:proof-of-noise-gen-trade-off}. Theorem~\ref{thm:noise-generalization-trade-off} reveals that the expected population risk is governed by a fundamental trade-off between two competing components, which we formalize as (I) Optimization Drift and (II) Information Complexity. 
This trade-off is governed by the reasoning noise scale $\sigma$ and the reasoning length $L$. Recall Theorem~\ref{thm:g-inverse-prop-L}, we can roughly derive that the order of Term (I) is $\mathcal{O}(\sigma)$ or $\mathcal{O}(\frac{1}{L})$, and the order of Term (II) is $\mathcal{O}\left(\sqrt{\frac{1}{\sigma} \log \frac{1}{\sigma}}\right)$ or $\mathcal{O}\left(\sqrt{L \log L}\right)$. 

\textbf{Term (I): Optimization Drift} (Underfitting Regime). Analogous to Bias, this term quantifies the deviation from the optimal descent path. As $\sigma \to \infty$, this term dominates as excessive noise disrupts the effective semantic drift. The trajectory is forced to diverge from the correct solution, preventing convergence and resulting in high empirical risk. 

\textbf{Term (II): Information Complexity} (Overfitting Regime). Analogous to Variance, this term bounds the mutual information between the reasoning path and the input. As $\sigma \to 0$ (approximating greedy decoding), this term diverges, meaning the policy encodes excessive instance-specific details. This leads to brittle reasoning chains that are highly sensitive to prompt perturbations, resulting in poor generalization. 



Consequently, there exists a strictly positive optimal noise scale $\sigma^* > 0$ that minimizes the total bound. This provides a theoretical justification for the empirical success of sampling-based strategies (e.g., Self-Consistency) over greedy decoding, suggesting that controlled stochasticity is a prerequisite for robust reasoning. We also provide an alternative risk analysis in Appendix~\ref{app:alternative-risk-analysis} for similar conclusions. 

\subsection{Takeaways}
\label{subsec:takeaways}
Our previous analysis from the perspectives of noise (Subsection~\ref{subsec:perspective-of-noise}) and risk (Subsection~\ref{subsec:perspective-of-risk}) 
yields the following key conclusions about the factors governing the convergence of the optimal CoT length, $L_\text{opt}$, in LLM reasoning via RL. 

\textbf{Remark 1. }
The intrinsic difficulty of a task dictates $L_\text{opt}$ via stability requirements. Mathematically, complex tasks are characterized by a larger smoothness parameter $\beta$ (indicating a rugged loss landscape). Per Term (I) in Theorem~\ref{thm:noise-generalization-trade-off}, a large $\beta$ amplifies the stability cost, necessitating lower optimization noise ($\sigma$) to prevent the empirical risk from exploding. Consequently, the policy must suppress noise by extending the reasoning process, converging to a longer $L_\text{opt}$ to ensure precise convergence. 



\textbf{Remark 2. }
Model capacity is inversely related to the optimal CoT length due to noise regularization. According to Term (II) in Theorem~\ref{thm:noise-generalization-trade-off}, higher-capacity models exhibit greater thought dispersion ($\mathbb{V}_t$), increasing the generalization gap. To mitigate this, they require a higher noise scale $\sigma$ to mask prompt sensitivities. This necessitates a shorter, more concise $L_\text{opt}$ to introduce the required stochasticity. 



\textbf{Remark 3. }
The converged CoT length is independent of the specific RL algorithm used. Our analysis indicates that $L_\text{opt}$ is an intrinsic property determined by task difficulty and model capacity. The algorithm's role is more likely to effectively optimize the policy to find this good solution. 


\textbf{Remark 4. }
The optimization noise scale is inversely correlated with the converged CoT length. Theorem~\ref{thm:g-inverse-prop-L} shows that higher noise levels incentivize convergence to a shorter $L_\text{opt}$, as noisier environments favor more robust, concise reasoning paths that are less susceptible to disruption. 


To validate these theoretical claims, the following section presents a series of experiments designed to systematically test each of these four remarks. 


\section{Experiments}
\label{sec:experiments}
\begin{figure*}[tp]
    \centering
    \includegraphics[width=\linewidth]{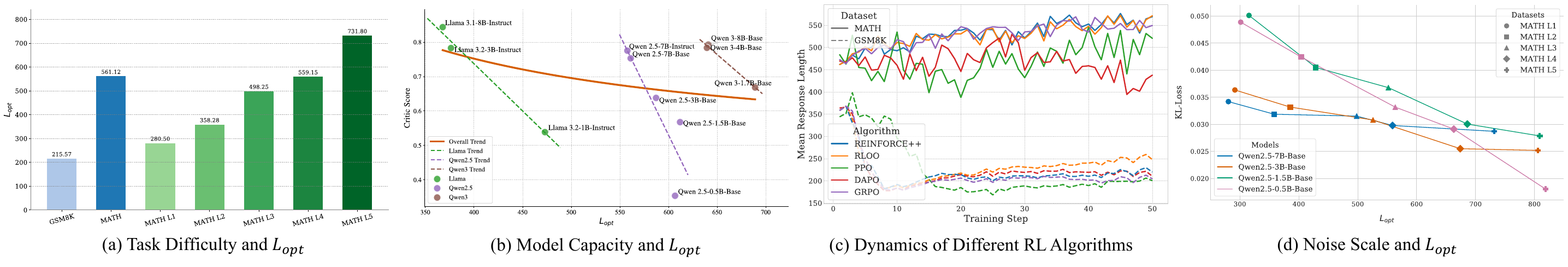}
    \caption{\textbf{Empirical validation of factors governing optimal CoT length ($L_\text{opt}$).} (a) \textbf{Task difficulty:} $L_\text{opt}$ increases with task complexity. (b) \textbf{Model capacity:} More powerful models converge to a shorter $L_\text{opt}$ to mitigate overfitting. (c) \textbf{RL algorithm:} The converged $L_\text{opt}$ is largely algorithm-agnostic. (d) \textbf{Optimization noise:} $L_\text{opt}$ is inversely correlated with noise (proxied by KL-Loss).} 
    \label{fig:experiment}
\end{figure*}

This section presents a series of experiments designed to empirically validate the theoretical insights established in Section~\ref{sec:analysis}. We systematically investigate the relationship between the optimal CoT length ($L_\text{opt}$) and four key factors. The detailed experimental setup is listed in Appendix~\ref{app:experimental-settings}. 

\textbf{Task Difficulty and $L_\text{opt}$. }
To validate Remark 1, we test on tasks of increasing difficulty (GSM8K and five levels of MATH) with Qwen2.5-7B-Base. As shown in Figure~\ref{fig:experiment}(a), $L_\text{opt}$ demonstrates a clear monotonic increase with task complexity. This aligns with Theorem~\ref{thm:noise-generalization-trade-off}, as more challenging tasks require a greater reasoning depth. To avoid the high empirical loss from underfitting, the policy learns to generate a correspondingly longer CoT. Additional validations are in Appendix~\ref{app:more-experiments}.

\textbf{Model Capacity and $L_\text{opt}$. }
To verify Remark 2, we examine models with varying capacities. Figure~\ref{fig:experiment}(b) reveals a distinct negative correlation: within each model family (e.g., Llama 3, Qwen 2.5, and Qwen 3), larger and higher-performing models consistently converge to a shorter $L_\text{opt}$. This finding supports our analysis based on Theorem~\ref{thm:noise-generalization-trade-off}. Larger models are more prone to overfitting, a risk exacerbated by longer CoTs. Consequently, they learn to produce more concise reasoning to minimize total error by balancing empirical performance with generalization.

\textbf{RL Algorithm and $L_\text{opt}$. }
To test Remark 3, we compare five different RL algorithms. As shown in Figure~\ref{fig:experiment}(c), despite different learning dynamics, all algorithms guide the policy to a similar final $L_\text{opt}$ for a given task and model, even though the initial dynamics differ. This supports our claim that the RL algorithm primarily facilitates optimization, while the final $L_\text{opt}$ is an intrinsic property determined by task difficulty and model capacity. 

\textbf{Noise Scale and $L_\text{opt}$. }
Finally, to validate Remark 4, we use the KL-Loss as a proxy for optimization noise. 
Figure~\ref{fig:experiment}(d) shows a clear inverse correlation between the final KL-Loss and $L_\text{opt}$. This result substantiates Theorem~\ref{thm:g-inverse-prop-L}, confirming that noisier optimization environments favor shorter CoTs to ensure robust convergence, as longer reasoning paths are more susceptible to disruption. 
\section{Related Work}
\label{sec:related-work}
\textbf{Reasoning with LLMs. }
Prompting techniques have advanced LLM reasoning beyond few-shot learning~\citep{brown2020language}. Chain-of-Thought (CoT) prompting elicits intermediate reasoning steps~\citep{wei2022chain}, with methods like Self-Consistency improving robustness~\citep{wang2022self}. Other reasoning formats include Program-of-Thought~\citep{chen2022program} and internal thought tokens~\citep{zelikman2024quiet,chen2025towards}. While theoretical analyses are emerging~\citep{ton2024understanding}, much of these remains empirical and lacks a unifying theoretical foundation.


\textbf{RL for LLM Reasoning. }
RL is a key method for test-time scaling, a technique proven to enhance LLM reasoning~\citep{snell2024scaling, wu2024inference, chen2025empirical}. Research in this area follows two main tracks: developing new post-training frameworks like VC-PPO, DAPO, and VAPO~\citep{yuan2025s, yu2025dapo, yue2025vapo}, and understanding the fundamental effects of RL. Studies in the latter track investigate RL's advantages over SFT~\citep{setlur2025scaling}, its impact on knowledge boundaries~\citep{yue2025does}, the role of reward modeling~\citep{shao2025spurious}, and its tendency to amplify pre-trained behaviors~\citep{zhao2025echo}. The absence of a common theoretical framework, however, can lead to divergent findings.

\textbf{Optimization and Generalization. }
A core concept in machine learning optimization is that flatter minima in the loss landscape often lead to better generalization, a principle central to Sharpness-Aware Minimization (SAM)~\citep{keskar2016large}. This effect is closely linked to the noise inherent in the optimization process~\citep{smith2018bayesian}. Generalization itself, which measures performance on unseen data, is often analyzed through the powerful lens of information theory. Concepts like entropy and mutual information are used to establish theoretical bounds on the generalization capacity of deep learning models~\citep{russo2019much,xu2017information} and have recently been applied to analyze synthetic data generation and reasoning errors in LLMs~\citep{gan2024towards,ton2024understanding}.

\section{Conclusion}
\label{sec:conclusion}


In this paper, we introduced CoT-Space, a novel theoretical framework that addresses the critical misalignment between token-level reasoning formulation and the reasoning-level nature of CoT by reframing LLM reasoning as a continuous state optimization problem. This shift in perspective serves as a conceptual bridge, demonstrating that the foundational principles of classical machine learning are not obsolete but can be powerfully revitalized to analyze modern LLM behaviors. 
By revitalizing classical concepts of noise and risk analysis, we demonstrate that a fundamental trade-off between underfitting (shorter CoTs) and overfitting (longer CoTs) governs the reasoning process. This analysis provides a solid theoretical grounding for the existence of an optimal CoT length, $L_{opt}$, that minimizes total error. 
Looking forward, the CoT-Space framework offers a foundation for designing more principled algorithms and advancing our ability to build more reliable AI reasoning systems.

\section*{Impact Statement}
This paper presents work whose goal is to advance the field of Machine Learning. There are many potential societal consequences of our work, none of which we feel must be specifically highlighted here.

\bibliography{reference}
\bibliographystyle{icml2026}

\newpage
\appendix
\onecolumn

\section{Limitations of Token-Level Formulation in Natural Language}
\label{app:limitation-of-natural-language}
Consider a CoT reasoning process comprises multiple conceptual steps $(\xi_1,\xi_2,\dots,\xi_L)$, which naturally reveals two distinct procedural layers: 

\textbf{(1) Reasoning-Level:} The strategic planning of abstract reasoning steps $(\xi_1 \rightarrow \xi_L)$. 

\textbf{(2) Token-Level:} The tactical realization of specific steps via token generation. 

Traditional MDP formulations, however, operate only at the problematic token level, failing to capture the higher-order planning inherent to CoT. 

The token-level state space $\mathcal{S}$ is problematic for two reasons. First, it grows exponentially with sequence length ($\mathcal{S}_{t}=\mathcal{A}^{t}$), making it too vast to explore effectively during training. 
Second, the auto-regressive process imposes a discrete and unidirectional topology, ($\forall \bm{s}_t \in \mathcal{S}_t, \mathcal{T}(\bm{s}_t,\pi) \in \mathcal{S}_{t+1}$)
further complicating analysis. 
This narrow focus creates a theoretical barrier that obscures the problem's true nature. To establish a sound foundation, we must therefore transition from a token-level formulation to a reasoning-level perspective. 
\section{An Alternative Risk Analysis from Information-Theoretic Perspective}
\label{app:alternative-risk-analysis}
Here, we analyze the generalization risk of a policy to provide further theoretical insights, focusing on the upper bounds of reasoning risks. 
In the context of CoT-Space, risks can be defined similarly to classical ML. The total expected error of a policy $\pi$ on the true data distribution $\mathcal{D}$, denoted by $R(\pi) = \mathbb{E}_{q\sim\mathcal{D}}[C(s_q^\pi,q)]$, can be decomposed into two distinct components: $ R(\pi) = \hat{R}_n(\pi) + \left( R(\pi) - \hat{R}_n(\pi) \right)$. 
\begin{equation}
    R(\pi) = \underbrace{\hat{R}_n(\pi)}_{\text{Empirical Loss (Bias)}} + \underbrace{\left( R(\pi) - \hat{R}_n(\pi) \right)}_{\text{Generalization Error (Variance)}}. 
\end{equation}
Here, $\hat{R}_n(\pi) = \frac{1}{n}\sum_{i=1}^n C(s_{q_i}^\pi,q_i)$ is the empirical loss (a.k.a. bias) on the training set. $R(\pi) - \hat{R}_n(\pi)=\mathbb{E}_{q \sim \mathcal{D}}[C(s_{q}^{\pi},q)] - \frac{1}{n}\sum_{i=1}^{n}C(s_{q_i}^{\pi},q_i)$ is the reasoning generalization error (a.k.a. virance). In the following parts, we will analyze how each component behaves as a function of the CoT complexity. 

\subsection{An Upper Bound for Generalization Error: The Overfitting Risk}
Given $n$ queries $\{q_i\}_{i=1}^{n}$ sampled i.i.d. from a distribution $\mathcal{D}$, and a policy $\pi$ learned from $\{q_i\}_{i=1}^{n}$, let's denote $s_{q}^{\pi} = \pi(q)$ as the end reasoning state output by $\pi$ on query $q$. The reasoning generalization error of $\pi$ on $\mathcal{D}$ is defined as: 
$R(\pi) - \hat{R}_n(\pi)=\mathbb{E}_{q \sim \mathcal{D}}[C(s_{q}^{\pi},q)] - \frac{1}{n}\sum_{i=1}^{n}C(s_{q_i}^{\pi},q_i).$ 
This error quantifies a policy's ability to generalize its reasoning to unseen queries, a concept parallel to the generalization error in traditional machine learning theory. 
By applying classical PAC-Bayesian and information-theoretic results of ML, we can derive an upper bound for the reasoning generalization error. 

\begin{theorem}\label{thm:mi-reasoning-gen-err-upbd}
(Information-theoretical generalization upper bound for LLM reasoning.) With CoT-Space defined above, if the policy $\pi$ is trained on an i.i.d. drawn training set $S$ with size $n$, the following upper bound for the reasoning generalization error holds:
$$
    |\mathbb{E}[R(\pi) - \hat{R}_n(\pi)]| \leq \sqrt{\frac{C_{max}^2 \cdot \left(\mathbb{E}_{\pi,S}[K] \right) \cdot \log|\mathcal{A}|}{2n}},
$$
where $C_{max}$ is the maximum possible value of the reasoning loss $C(\cdot)$, $L$ is the number of reasoning steps in a CoT data, $|\xi_i|$ is the number of tokens in the $i$-th reasoning step in a CoT data, $\mathbb{E}[K] \approx \mathbb{E}[L] \cdot \mathbb{E}[|\xi|]$ is the expected total number of tokens in the generated CoT. 
\end{theorem}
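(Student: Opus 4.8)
\section*{Proof Proposal for Theorem~\ref{thm:mi-reasoning-gen-err-upbd}}

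The plan is to instantiate a standard information-theoretic generalization bound (of the Xu--Raginsky / Russo--Zou type) for the CoT-Space setting, where the ``hypothesis'' learned by the algorithm is the end reasoning state $s_q^\pi$ produced on each training query, and then to bound the relevant mutual information by a description-length (entropy) argument using the combinatorial structure of CoT outputs. Concretely, the classical result states that if a learning algorithm produces output $W$ from a training sample $S$ of size $n$, and the per-example loss is bounded in $[0, C_{max}]$ (hence $\sigma$-subgaussian with $\sigma = C_{max}/2$ after centering), then $|\mathbb{E}[R(\pi) - \hat R_n(\pi)]| \le \sqrt{\frac{C_{max}^2 \, I(W; S)}{2n}}$. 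So the whole theorem reduces to showing $I(W; S) \le \mathbb{E}_{\pi,S}[L]\cdot \mathbb{E}_{\pi,S}[|\xi|]\cdot \log|\mathcal{A}|$, where $W$ encodes the reasoning behavior of the trained policy.

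First I would set up the measure-theoretic bookkeeping: identify $W$ with the collection of reasoning trajectories (or equivalently the policy's realized outputs) that the algorithm commits to, and verify the subgaussianity of the centered loss $C(s_q^\pi) - \mathbb{E}[C(s_q^\pi)]$ via Hoeffding's lemma, since $C \in [0, C_{max}]$ by the definition of the reasoning loss (Definition~\ref{def:reasoning-loss}) together with the hypothesis that $C_{max}$ is its maximum. Second, I would invoke the mutual-information generalization lemma to get the $\sqrt{C_{max}^2 I(W;S)/(2n)}$ form. Third — the crux — I would bound $I(W;S) \le H(W)$ and then bound $H(W)$ by the number of tokens needed to describe the reasoning output: a CoT consists of $L$ reasoning steps, each step $\xi_i$ being a token sequence over the alphabet $\mathcal{A}$, so a single CoT with step count $L$ and per-step token count $|\xi|$ is describable with at most $L \cdot |\xi| \cdot \log|\mathcal{A}|$ nats; taking expectations over the randomness of $\pi$ and $S$ and using (a form of) subadditivity/Jensen to replace $\mathbb{E}[L \cdot |\xi|]$ by the product $\mathbb{E}[L]\cdot\mathbb{E}[|\xi|]$ yields the stated numerator. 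Substituting back gives the claimed bound.

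The main obstacle — and the step I would be most careful about — is the description-length bound on $H(W)$ and in particular the passage from $\mathbb{E}_{\pi,S}[L \cdot |\xi|]$ to the product $\mathbb{E}_{\pi,S}[L]\cdot\mathbb{E}_{\pi,S}[|\xi|]$. This factorization is not an identity in general; it requires either an independence/conditional-independence assumption between the number of reasoning steps and the per-step length (which is natural to posit in this framework and I would state as such), or an interpretation of $|\xi|$ as an average-per-step token count so that $L\cdot|\xi|$ is literally the total token budget $H$ and the product form is definitional. A secondary subtlety is making precise what $W$ is: to keep $I(W;S)$ meaningful one wants $W$ to capture only the policy's \emph{input--output} behavior on a finite relevant set (e.g.\ the training queries and a fresh test query), not the full parameter vector of the LLM, so that the entropy bound in terms of emitted tokens is legitimate; I would define $W$ accordingly and note that the data-processing inequality makes the bound only stronger for the actual parameters. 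The remaining steps (Hoeffding, the MI lemma, $I \le H$, Jensen) are routine.
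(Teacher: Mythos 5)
Your proposal follows essentially the same route as the paper's proof: both invoke the Xu--Raginsky information-theoretic generalization bound (the paper's Lemma~\ref{lm:mi-gen-upbd-base}, citing \citet{xu2017information}) with the subgaussianity coming from $C \in [0, C_{\max}]$, and both then reduce to bounding the mutual information by the description length of the emitted CoT, $I(\pi;S) \le \mathbb{E}\bigl[\sum_{i=1}^L |\xi_i|\bigr]\log|\mathcal{A}|$ (the paper's Lemma~\ref{lm:mi-upbd}). There is no genuine divergence in strategy, only in how candidly the two pieces of writing handle the loose ends. You flag two real issues that the paper in fact passes over silently: (i) the paper writes $\mathbb{E}_{S,\pi}\bigl[\sum_{i=1}^L |\xi_i|\bigr] = \mathbb{E}[L]\cdot\mathbb{E}[|\xi|]$ as an equality, which requires a Wald-type independence condition between the number of steps and the per-step length, or else the reading of $|\xi|$ as a per-step average that makes the product definitional; and (ii) the paper's entropy argument upper-bounds $I(\pi;S)$ by the token-level entropy of the policy's \emph{output}, which is only legitimate if $\pi$ is identified with its input--output behavior on the relevant queries rather than with the full parameter vector. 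You are right that both require an explicit hypothesis or a redefinition of the learned object to make the argument tight; neither invalidates the conclusion under those hypotheses, but neither is ``routine'' in the way the paper presents it. One small correction to your aside: the data-processing inequality gives $I(\text{output};S) \le I(\pi;S)$, so it does not strengthen the bound for the full parameter vector; the correct fix is your other suggestion, namely to define $W$ as the output behavior and observe that the reasoning loss $C(s_q^\pi)$ is a function of $W$ alone, so the MI lemma applies directly with $W$ in place of $\pi$.
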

A detailed discussion and proof are provided in Appendix~\ref{app:analysis-for-mi-reasoning-error-upbd}. 
Theorem \ref{thm:mi-reasoning-gen-err-upbd} explicitly connects the generalization error to the observable, structural properties of the reasoning process. It suggests that overfitting can occur 
in overlong response length. This provides a direct theoretical basis for regularizing the CoT generation process in RL to improve reasoning generalization. 

It is important to acknowledge that standard generalization bounds typically assume i.i.d. data distributions, whereas RL involves non-stationary, on-policy data generation. However, we apply these classical bounds here as a conceptual proxy to capture the structural relationship between policy complexity (proxied by CoT length $L$) and generalization risk. This theoretical framing allows us to mechanistically interpret ``overthinking'' as a form of overfitting, where the policy memorizes complex reasoning paths rather than learning generalizable logic. Developing tighter bounds specifically for the non-stationary RL setting remains a critical direction for future work. 

\subsection{A Lower Bound for Empirical Loss: The Underfitting Risk}
The empirical loss, $\hat{R}_n(\pi)$, measures a policy's ability to solve the problems it has already encountered. For complex reasoning tasks, a certain minimum number of steps is required for a correct solution. Intuitively,
if $L$ is too small, the policy lacks the necessary ``thinking space'' to solve the queries, even those in the training set. This leads to high empirical loss and underfitting. 
As $L$ increases, the policy has more capacity to find the correct reasoning path, so the empirical loss $\hat{R}_n(\pi)$ generally decreases. 

Building upon this intuition, we can derive a lower bound for the empirical reasoning loss $\hat{R}_n(\pi)$. 

\begin{theorem} \label{thm:empirical-risk-lwbd}
With CoT-Space defined above, if the policy $\pi$ is trained on an i.i.d. drawn training set, the following lower bound for the empirical reasoning risk holds:
$$
    \hat{R}_n(\pi) \ge P_\pi(L < L^*) \cdot C_{fail},
$$
where $L^*$ is the intrinsic required reasoning depth for the query. Note that $L^*$ is a theoretical property of the problem difficulty and is unobservable in practice. This theorem formalizes the intuition that if a policy generates a CoT shallower ($L$) than the problem requires ($L^*$), it is guaranteed to fail (underfit). $C_{fail}$ is the upper bound of the reasoning loss in the failed reasoning results. 
\end{theorem}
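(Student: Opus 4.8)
## Proof Proposal

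The plan is to establish the bound by a direct decomposition of the empirical loss according to whether each training query is solved successfully or not, and then to bound the contribution of the failed queries from below. First I would partition the training set $\{q_i\}_{i=1}^n$ into the successful set $S_{\text{succ}} = \{i : \pi \text{ solves } q_i\}$ and the failed set $S_{\text{fail}} = \{i : \pi \text{ fails on } q_i\}$, where $|S_{\text{fail}}| = n_{\text{fail}}$ so that $P_\pi(L < L^*) = n_{\text{fail}}/n$ by definition. The key structural input, borrowed from the intuition preceding the statement, is that a correct solution to query $q_i$ requires at least $L^*$ reasoning steps; hence a query is failed precisely when the policy's realized CoT depth falls short, i.e. $L < L^*$. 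This identifies the failure event with the depth-deficiency event and lets us rewrite $n_{\text{fail}} = \sum_{i=1}^n \mathbf{1}[L_i < L^*]$.

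Next I would write the empirical loss as
\begin{equation}
\hat{R}_n(\pi) = \frac{1}{n}\sum_{i=1}^n C(s_{q_i}^\pi) = \frac{1}{n}\sum_{i \in S_{\text{succ}}} C(s_{q_i}^\pi) + \frac{1}{n}\sum_{i \in S_{\text{fail}}} C(s_{q_i}^\pi).
\end{equation}
For the successful terms, since each $s_{q_i}^\pi$ with $i \in S_{\text{succ}}$ reaches a minimum (or effectively zero loss by Definition~\ref{def:reasoning-loss}, where $C(m) = 0$ for $m \in \bm{M}_q$), the first sum is nonnegative and can simply be dropped, giving $\hat{R}_n(\pi) \ge \frac{1}{n}\sum_{i \in S_{\text{fail}}} C(s_{q_i}^\pi)$. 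For the failed terms, I would invoke the definition of $C_{\text{fail}}$ — but here care is needed about the direction of the inequality.

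The main obstacle, and the point that requires the most careful handling, is the role of $C_{\text{fail}}$: the statement calls it ``the upper bound of the reasoning loss in the failed reasoning results,'' yet it appears in a \emph{lower} bound on $\hat{R}_n$. Taken literally, $C(s_{q_i}^\pi) \le C_{\text{fail}}$ would run the wrong way. The resolution I would adopt is to interpret $C_{\text{fail}}$ as a characteristic (worst-case attainable) failure loss and argue that, for a genuinely failed reasoning trajectory, the loss is bounded \emph{below} by $C_{\text{fail}}$ up to the convention that all failure states are treated at this common level — or, more cleanly, to restate the lemma with $C_{\text{fail}}$ as a lower bound on the loss of any failed state, which is the substantively correct version. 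Under that reading, each failed term satisfies $C(s_{q_i}^\pi) \ge C_{\text{fail}}$, so
\begin{equation}
\hat{R}_n(\pi) \ge \frac{1}{n}\sum_{i \in S_{\text{fail}}} C_{\text{fail}} = \frac{n_{\text{fail}}}{n}\, C_{\text{fail}} = P_\pi(L < L^*)\cdot C_{\text{fail}},
\end{equation}
which is the claimed bound. I would also note the measure-theoretic triviality that $P_\pi(L < L^*) = n_{\text{fail}}/n \ge 0$, so the bound is vacuous but valid when the policy solves every training query, and becomes informative exactly when $L^*$ exceeds the typical realized depth — matching the underfitting narrative. If one instead wants the literal ``upper bound'' reading of $C_{\text{fail}}$ to be consistent, the only honest fix is to replace it by $C_{\min}^{\text{fail}} := \min_{i \in S_{\text{fail}}} C(s_{q_i}^\pi)$ in the statement; I would flag this so the final write-up is self-consistent.
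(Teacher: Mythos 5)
Your proof is correct and follows essentially the same route as the paper's: partition the training set into depth-sufficient and depth-deficient queries, drop the nonnegative contribution of the successful set, and bound each failed term from below by $C_{fail}$, yielding $\hat{R}_n(\pi) \ge (n_{fail}/n)\,C_{fail}$. You also correctly flagged the slip in the theorem statement's phrasing of $C_{fail}$ as an ``upper bound'' — the paper's own Assumption~\ref{asm:policy-failure-on-insufficient-depth} in fact defines $C_{fail}$ as a \emph{lower} bound on the loss of any failed state (with $C_{max}$ as the upper bound), which is precisely the reading your argument uses.
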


A detailed proof is provided in Appendix~\ref{app:proof-for-empirical-risk-lwbd}. 
Theorem~\ref{thm:empirical-risk-lwbd} provides a formal basis for the ``underfitting'' portion of our trade-off curve. It demonstrates that the empirical loss is directly constrained by the probability that the policy generates a CoT that is too short for the problem's intrinsic complexity. A policy that tends to produce short CoTs 
will have a high failure rate 
on 
non-trivial problems. Consequently, its empirical loss $\hat{R}_n(\pi)$ is guaranteed to be high. This formalizes why a policy must maintain enough CoT length $L$ to achieve a low empirical loss, thus completing our explanation for the tradeoff in our total error analysis and the existence of an optimal $L_\text{opt}$. 

\section{Proofs}
\label{app:proofs}

\subsection{Proof of Assumption~\ref{asm:exponential-expressive-redundancy}}
\label{app:proof-of-exponential-expressive-redundancy}

In this section, we provide a formal justification for Assumption~\ref{asm:exponential-expressive-redundancy}. We demonstrate that for any non-trivial semantic concept, the number of valid token-level realizations grows exponentially with the sequence length. We ground this analysis in Kolmogorov Complexity and the combinatorial properties of natural language. 

Let $\mathcal{A}$ be the discrete vocabulary of tokens, where $|\mathcal{A}| = V$. The space of all token sequences of length $k$ is $\mathcal{A}^k$. Let $\psi$ be a specific abstract reasoning step (a semantic concept). We define the Kolmogorov Complexity for natural language as follows: 

\begin{definition}
    (Kolmogorov complexity for natural language.) We define the Kolmogorov Complexity for natural language, denoted as $K(\psi)$, as the length of the shortest possible description (or program) $p$ that can effectively communicate the semantic meaning of $\psi$ in a universal language~\citep{li2008introduction}: 
    $$
        K(\psi) = \min_{p} \{ |p| : U(p) \equiv \psi \},
    $$
    where $U$ is a universal Turing machine (or in our context, an ideal language decoder), and $\equiv$ denotes semantic equivalence. 
\end{definition}

Let $L_{min}(\psi) = K(\psi)$ be the length of the shortest natural language sequence that maps to $\psi$. Let $p_\psi$ be the shortest realization of the semantic concept $\psi$ where $|p| = L_{min}(\psi)$. We provide the proof for the following lemma which characterizes the exponential growth of semantic equivalence set $\mathcal{V}(p_\psi,k)$: 

\begin{lemma}\label{lm:exponetial-growth-of-semantic-equivalence-set}
    (Exponential growth of semantic equivalence set.) For a semantic concept $\psi$ with minimal description length $L_{min}$, the size of its semantic equivalence set $|\mathcal{V}(p_\psi, k)|$ (the number of distinct token sequences of length $k$ that decode to $\psi$) satisfies:
    $$|\mathcal{V}(\psi, k)| \ge \alpha \cdot c^{k},$$
    for some constants $\alpha > 0$ and $c > 1$, provided $k \ge L_{min}$. 
\end{lemma}

The proof can be derived from two perspectives. We first consider the combinatorial construction via semantic-preserving operations. 
\begin{proof}
(Combinatorial construction via semantic-preserving operations.) 
Let $\tau_0$ be a minimal sequence realizing $\psi$, such that $|\tau_0| = L_{min}$. We can transform $\tau_0$ into longer sequences without altering its core semantics by applying a set of Semantic-Preserving Operations (SPOs), denoted as $\mathcal{O} = \{op_1, op_2, \dots, op_m\}$. 
Common SPOs in natural language include: 
\begin{itemize}
    \item Synonym Replacement: Replacing a token with a synonym (e.g., ``calculate'' $\to$ ``compute''). 
    \item Syntactic Expansion: Adding non-functional modifiers or filler phrases (e.g., ``Therefore'' $\to$ ``It can be clearly seen that''). 
    \item Structural Permutation: Changing active voice to passive voice, or reordering independent clauses. 
\end{itemize}
Let each operation $op_i$ increase the sequence length by an average of $\Delta l$ tokens and have a branching factor of $b$ (i.e., there are $b$ different ways to apply operations to increase length).
To generate a sequence of target length $k$ starting from $\tau_0$, we need to apply approximately $j = (k - L_{min}) / \Delta l$ operations. 

The number of distinct paths to generate a valid sequence of length $k$ is determined by the branching factor $b$ applied $j$ times. Thus, the lower bound on the number of realizations is:
\begin{equation}
    |\mathcal{V}(p_\psi, k)| \ge b^{j} = b^{\frac{k - L_{min}}{\Delta l}}. 
\end{equation}
We can rewrite this term to isolate $k$:
\begin{equation}
    |\mathcal{V}(p_\psi, k)| \ge \left( b^{\frac{1}{\Delta l}} \right)^{k} \cdot \left( b^{-\frac{L_{min}}{\Delta l}} \right). 
\end{equation}
Let $c = b^{1/\Delta l}$ and $\alpha = b^{-L_{min}/\Delta l}$. Since there are multiple valid synonyms and syntactic structures for any non-trivial concept, $b > 1$, and consequently $c > 1$. Thus, we obtain:
\begin{equation}
    |\mathcal{V}(p_\psi, k)| = \Omega(c^k). 
\end{equation}
This finishes the proof. 

\end{proof}

Alternatively, we can consider the information capacity of the reasoning process:
\begin{proof}
(Information-theoretic perspective.) Let the generation of a natural language token sequence be modeled as a stochastic process $\mathcal{X} = \{X_1, X_2, \dots\}$. The uncertainty of this process is measured by its Entropy Rate, denoted as $H(\mathcal{X})$. 

While the theoretical maximum entropy of selecting from a vocabulary $\mathcal{A}$ is $\log_2 |\mathcal{A}|$, natural language is constrained by grammar and logic, resulting in a lower actual entropy. We define the Effective Vocabulary Size, $V_{eff}$, as the weighted branching factor of the language:
\begin{equation}
    V_{eff} = 2^{H(\mathcal{X})}. 
\end{equation}
Physically, $V_{eff}$ represents the average number of plausible next-token choices available to the model at any given step. For any expressive language, $V_{eff} > 1$. 

According to the Asymptotic Equipartition Property (AEP) in information theory~\cite{cover1999elements}, as the sequence length $k$ increases, the set of valid natural language sequences concentrates in the Typical Set $A_\epsilon^{(k)}$. The size of this set scales as: 
\begin{equation}
    |A_\epsilon^{(k)}| \approx 2^{k \cdot H(\mathcal{X})} = (V_{eff})^k. 
\end{equation}
Now, consider the subset of sequences that specifically encode the semantic concept $\psi$. Let $K(\psi)$ denote the minimal information (in bits) required to specify the semantics of $\psi$. The redundancy available for syntactic variation is the difference between the channel capacity and the semantic payload: 
\begin{equation}
    R(k) = k \log_2 V_{eff} - K(\psi). 
\end{equation}

This ``free information'' $R(k)$ allows for variations in style, phrasing, and intermediate derivation steps without altering the final semantic truth. The number of valid realizations $N$ is determined by this redundancy: 

\begin{equation}
    N \propto 2^{R(k)} = 2^{k \log_2 V_{eff} - K(\psi)} = \frac{(V_{eff})^k}{2^{K(\psi)}}. 
\end{equation}

Since $K(\psi)$ is a constant intrinsic to the concept, and $V_{eff} > 1$ for any non-trivial language model, the term $(V_{eff})^k$ dominates. Consequently, the number of semantically equivalent realizations grows exponentially with the sequence length $k$, i.e.: 
\begin{equation}
    |\mathcal{V}(p_\psi, k)| = \Omega(c^k). 
\end{equation}
This finishes the proof. 
\end{proof}

By simply applying Lemma~\ref{lm:exponetial-growth-of-semantic-equivalence-set}, we can directly derive Assumption~\ref{asm:exponential-expressive-redundancy}.

\subsection{Proof of Lemma~\ref{lm:continuum-convergence-of-reasoning-level-state-space}}
\label{app:proof-of-continuum-convergence-of-reasoning-level-state-space}
\begin{proof}
We begin with the definition of the number of realizations for a complete reasoning process. 
\begin{definition}\label{def:number-of-realizations}
(Number of realizations.) For a complete reasoning trajectory of $L$ steps, $(\xi_1, \xi_2, ..., \xi_L)$, its total number of token-level realizations, $N_{\text{realize}}$, is the size of the Cartesian product of the semantic equivalence sets for each step.
$$
    N_{\text{realize}}(\xi_1, ..., \xi_L) = \prod_{l=1}^{L} |\mathcal{V}(\xi_l,k_l)|. 
$$
\end{definition}

Consider a single, fixed reasoning-level path $(\xi_1, ..., \xi_L)$, where each step $\xi_l$ is realized by an average of $k$ tokens. By Assumption~\ref{asm:exponential-expressive-redundancy}, the number of token-level realizations for this single path, $N_{\text{realize}}^K$, is bounded below:
\begin{equation}\label{eq:bound-for-N-realize}
    N_{\text{realize}}^K \ge \prod_{l=1}^{L} c^{k_l} = c^{\sum_{l=1}^{L}k_l} = c^{kL} = c^{K}. 
\end{equation}
This shows that the number of token-level trajectories corresponding to even a single reasoning path grows exponentially with the total number of tokens, denoted as $K = kL$.

Next, we consider the total number of distinct, valid reasoning states, $|\mathcal{S}_{\text{reasoning}}^{(K)}|$. 
For complex problems, given a total token amount $K$, the valid reasoning paths can be realized in distinct lengths that are no longer than $K$, which means $|\mathcal{S}_{\text{reasoning}}^{(K)}| = \sum_{K^{'} \le K}N_{\text{realize}}^{K^{'}}$.
According to Equation~(\ref{eq:bound-for-N-realize}), the number of realizations is bounded by an exponential term of the total number of tokens $K$, thus $|\mathcal{S}_{\text{reasoning}}^{(K)}|$ is of the same order with $N_{\text{realize}}^K$. 
\begin{equation}
    |\mathcal{S}_{\text{reasoning}}^{(K)}| = \Theta(c^{K}). 
\end{equation}

From Definition~\ref{def:reasoning-state-density}, the reasoning state density $\rho(K)$ is thus:
\begin{equation}
    \rho(K) = \frac{|\mathcal{S}_{\text{reasoning}}^{(K)}|}{|\mathcal{V}_{\text{semantic}}|} =\Theta(c^{K}). 
\end{equation}
Assuming the semantic volume $\mathcal{V}_{\text{semantic}}$ for a given problem domain is fixed, the state density $\rho(K)$ thus grows exponentially with $K$.

Finally, we connect high density to the expected distance between adjacent points with the following lemma. 

\begin{lemma}\label{lm:nearest-neighbor-distance-and-density}
(Nearest neighbor distance). Let points be distributed in a $d$-dimensional Euclidean space $\mathbb{R}^d$ according to a homogeneous Poisson point process with density $\rho$. The expected distance to the $k$-th nearest neighbor, denoted by $\mathbb{E}[R_k]$, is given by
$$ \mathbb{E}[R_k] = \frac{1}{(k-1)!} \left( \frac{1}{\rho C_d} \right)^{1/d} \Gamma\left(k + \frac{1}{d}\right), $$
where $C_d = \frac{\pi^{d/2}}{\Gamma(\frac{d}{2} + 1)}$ is the volume of a unit $d$-dimensional ball and $\Gamma(\cdot)$ is the Gamma function.
\end{lemma}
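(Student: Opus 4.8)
The plan is to prove Lemma~\ref{lm:nearest-neighbor-distance-and-density} by the classical route for Poisson point processes: first reduce the $k$-th nearest neighbor distance to a "void-count" event for a Poisson random variable, then read off the density of $R_k$, and finally evaluate the resulting Gamma-type integral. \textbf{Step 1 (place the reference point at the origin).} By stationarity of the homogeneous Poisson process, the law of the distance from a typical point to its $k$-th nearest neighbor is independent of location, and Slivnyak's theorem guarantees that conditioning on a process point at the origin leaves the remaining points distributed as an independent Poisson process of the same intensity $\rho$. Hence it suffices to analyze $N(r)$, the number of process points in the ball $B(0,r)$, which is $\mathrm{Poisson}(\lambda(r))$ with $\lambda(r) = \rho\,\mathrm{vol}(B(0,r)) = \rho C_d r^d$. \textbf{Step 2 (tail of $R_k$).} The event $\{R_k > r\}$ coincides exactly with $\{N(r) \le k-1\}$, so
$$ \Pr(R_k > r) = \sum_{j=0}^{k-1} e^{-\lambda(r)}\frac{\lambda(r)^j}{j!}, \qquad \lambda(r) = \rho C_d r^d. $$

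\textbf{Step 3 (density of $R_k$).} Next I differentiate $F_{R_k}(r) = 1 - \Pr(R_k > r)$. Using the telescoping identity $\tfrac{d}{d\lambda}\sum_{j=0}^{k-1} e^{-\lambda}\lambda^j/j! = -e^{-\lambda}\lambda^{k-1}/(k-1)!$ (a standard consequence of term-by-term cancellation) together with the chain rule and $\lambda'(r) = d\,\rho C_d r^{d-1}$, all intermediate terms cancel and I obtain the closed form
$$ f_{R_k}(r) = \frac{d\,(\rho C_d)^{k}}{(k-1)!}\, r^{dk-1}\, e^{-\rho C_d r^d}, \qquad r \ge 0. $$
\textbf{Step 4 (evaluate the mean).} Then $\mathbb{E}[R_k] = \int_0^\infty r\,f_{R_k}(r)\,dr = \frac{d(\rho C_d)^k}{(k-1)!}\int_0^\infty r^{dk}\,e^{-\rho C_d r^d}\,dr$. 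The substitution $u = \rho C_d r^d$ converts $r^{dk}\,dr$ into $\tfrac{1}{d}(\rho C_d)^{-k-1/d}\,u^{\,k+1/d-1}\,du$, turning the integral into $\tfrac{1}{d}(\rho C_d)^{-k-1/d}\,\Gamma\!\left(k+\tfrac1d\right)$; multiplying by the prefactor $d(\rho C_d)^k/(k-1)!$ collapses everything to
$$ \mathbb{E}[R_k] = \frac{1}{(k-1)!}\left(\frac{1}{\rho C_d}\right)^{1/d}\Gamma\!\left(k+\frac{1}{d}\right), $$
which is the claimed formula (and reduces to the familiar $\mathbb{E}[R_1] = (\rho C_d)^{-1/d}\Gamma(1+\tfrac1d)$ when $k=1$).

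The computation itself is elementary once the probabilistic reduction is in place; the main subtlety is Step~1, namely justifying that the quantity "distance from a typical point" is computed correctly by conditioning a Poisson process on having a point at the origin — this is exactly the content of Slivnyak's theorem, and it also explains why the same formula applies whether we view $0$ as an arbitrary fixed location or as a process point. A secondary, routine point to check is the legitimacy of differentiating the tail in Step~3 (immediate, since the sum is finite and every summand is smooth) and the convergence of the integral in Step~4 (immediate from the Gaussian-type exponential decay). No part of the argument relies on the semantic interpretation of the states; it is a purely geometric-probabilistic fact that is then invoked, with $d = D$ and $\rho = \rho(K)$, to conclude the nearest-neighbor scaling in Theorem~\ref{thm:continuum-convergence-of-reasoning-level-state-space}.
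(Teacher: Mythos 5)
Your proof is correct and follows essentially the same route as the paper's: express $\Pr(R_k > r)$ as a Poisson void-count probability, differentiate the telescoping sum to obtain the density $f_{R_k}(r) = \frac{d(\rho C_d)^k r^{dk-1}}{(k-1)!} e^{-\rho C_d r^d}$, and evaluate the mean via the substitution $u = \rho C_d r^d$ to produce the Gamma function. The only addition is your explicit appeal to stationarity and Slivnyak's theorem in Step~1, which the paper leaves implicit but which does not change the argument.
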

The proof is provided in Appendix~\ref{app:proof-of-nearest-neighbor-distance-and-density}. Lemma~\ref{lm:nearest-neighbor-distance-and-density} implies the relation between the density and the expected distance of the points in a space. By only considering the nearest adjacent points, in a $D$-dimensional space, the expected distance $\mathbb{E}[\operatorname{dist}_{s_i \bowtie s_j}(s_i, s_j)]$ between adjacent nearest neighbor points $s_i, s_j$ is inversely related to the density, approximately scaling as: 
\begin{equation}
    \mathbb{E}_{s_i \bowtie s_j}[\operatorname{dist}(s_i,s_j)] \propto \rho^{-1/D}. 
\end{equation}


Since $\rho(K)$ grows exponentially, it follows that:
\begin{equation}
    \lim_{K \to \infty} \mathbb{E}_{s_i \bowtie s_j}[\operatorname{dist}(s_i, s_j)] = 0. 
\end{equation}
When the expected distance between states approaches zero, it implies that for any state and any direction, another valid state can be found within an infinitesimally small neighborhood. This is the defining characteristic of a continuous space or manifold. Therefore, as the reasoning token amount $K$ increases, the discrete CoT-Space increasingly resembles a continuum, which validates the application of continuous optimization tools for its analysis. 

This finishes the proof. 
\end{proof}

\subsection{Proof of Lemma~\ref{lm:nearest-neighbor-distance-and-density}}
\label{app:proof-of-nearest-neighbor-distance-and-density}
\begin{proof}

The proof is built upon~\citet{haenggi2005distances} and ~\citet{miyagawa2018nearest}. 

First, we derive the probability density function, $f_k(r)$, for the distance to the $k$-th nearest neighbor. The cumulative distribution function (CDF), $F_k(r)$, is the probability that a $d$-dimensional ball $B_r$ of radius $r$ contains at least $k$ points. The volume of this ball is $V_d(r) = C_d r^d$.

The number of points $X$ in a region of volume $V$ follows a Poisson distribution with mean $\rho V$: 
\begin{equation}
P(X=x) = \frac{(\rho V)^x}{x!} e^{-\rho V}.  
\end{equation}

The CDF is therefore $1$ minus the probability of finding fewer than $k$ points in the ball $B_r$:
\begin{equation}
F_k(r) = P(\text{points in } B_r \ge k) = 1 - \sum_{x=0}^{k-1} P(X=x) = 1 - e^{-\rho C_d r^d} \sum_{x=0}^{k-1} \frac{(\rho C_d r^d)^x}{x!}. 
\end{equation}

The PDF is the derivative of the CDF with respect to $r$:
\begin{equation}
f_k(r) = \frac{\dd}{\dd r} F_k(r) = - \frac{\dd}{\dd r} \left( e^{-\rho C_d r^d} \sum_{x=0}^{k-1} \frac{(\rho C_d r^d)^x}{x!} \right). 
\end{equation}

Using the product rule, the derivative simplifies due to a telescoping sum, yielding:
\begin{equation}
f_k(r) = \frac{d(\rho C_d)^k r^{dk-1}}{(k-1)!} e^{-\rho C_d r^d}. 
\end{equation}


The expected value $\mathbb{E}[R_k]$ is defined as the integral of $r$ times its PDF over its domain:
\begin{equation}
\mathbb{E}[R_k] = \int_0^\infty r \cdot f_k(r) \, \dd r. 
\end{equation}

Substituting the PDF we derived:
\begin{equation}
\mathbb{E}[R_k] = \int_0^\infty r \cdot \left( \frac{d(\rho C_d)^k r^{dk-1}}{(k-1)!} e^{-\rho C_d r^d} \right) \, \dd r. 
\end{equation}

We combine the terms involving $r$ and move the constants outside the integral:
\begin{equation}
\mathbb{E}[R_k] = \frac{d(\rho C_d)^k}{(k-1)!} \int_0^\infty r^{dk} e^{-\rho C_d r^d} \, \dd r. 
\end{equation}

To solve the integral, we perform a change of variables. Let $t = \rho C_d r^d$. This implies:
\begin{equation}
r = \left(\frac{t}{\rho C_d}\right)^{1/d} \quad \text{and} \quad \dd t = d \rho C_d r^{d-1} \, \dd r. 
\end{equation}

The integral becomes:
\begin{equation}
\begin{aligned}
\int_0^\infty r^{dk} e^{-\rho C_d r^d} \, \dd r &= \int_0^\infty \left(\frac{t}{\rho C_d}\right)^k e^{-t} \frac{\dd t}{d \rho C_d r^{d-1}} \\
&= \int_0^\infty \left(\frac{t}{\rho C_d}\right)^k e^{-t} \frac{\dd t}{d \rho C_d (t/(\rho C_d))^{(d-1)/d}} \\
&= \frac{1}{d(\rho C_d)^{k+1-(d-1)/d}} \int_0^\infty t^{k - (d-1)/d} e^{-t} \, \dd t \\
&= \frac{1}{d(\rho C_d)^{k+1/d}} \int_0^\infty t^{(k+1/d)-1} e^{-t} \, \dd t. 
\end{aligned}
\end{equation}

The remaining integral is the definition of the Gamma function, $\Gamma(k + 1/d)$. Thus:
\begin{equation}
\int_0^\infty r^{dk} e^{-\rho C_d r^d} \, \dd r = \frac{\Gamma(k + 1/d)}{d(\rho C_d)^{k+1/d}}. 
\end{equation}

Substituting this result back into the expression for $\mathbb{E}[R_k]$:
\begin{equation}
\mathbb{E}[R_k] = \frac{d(\rho C_d)^k}{(k-1)!} \left( \frac{\Gamma(k + 1/d)}{d(\rho C_d)^{k+1/d}} \right). 
\end{equation}

The terms $d$ and $(\rho C_d)^k$ cancel out, leaving:
\begin{equation}
\mathbb{E}[R_k] = \frac{1}{(k-1)!} \frac{\Gamma(k+1/d)}{(\rho C_d)^{1/d}}. 
\end{equation}

Which can be written as the final result:
\begin{equation}
\mathbb{E}[R_k] = \frac{1}{(k-1)!} \left( \frac{1}{\rho C_d} \right)^{1/d} \Gamma\left(k + \frac{1}{d}\right). 
\end{equation}

This finishes the proof.
\end{proof}

\subsection{Proof of Theorem~\ref{thm:convergence-of-continuum-errors}}
\label{app:proof-of-convergence-of-continuum-errors}

Our goal is to show that both error components converge to zero as the reasoning state density $\rho(K)$ increases. 
Consider a $D$-dimensional semantic space. 
This analysis is predicated on a reasonable assumption about the locality of a single reasoning step. 


\begin{assumption}\label{asm:bounded-step-radius}
(Bounded step optimization). Any single reasoning step corresponds to a transition in the semantic space with an effective maximum radius $R_\text{max}$ and a minimum radius $R_\text{min}$. Thus, from a state $s$, the set of all possible subsequent states is contained within the region between two concentric $D$-dimensional hyperspheres centered at $s$ with radii $R_\text{min}$ and $R_\text{max}$. This region can be denoted as $B_R(s)$. 
\end{assumption}

Following Assumption~\ref{asm:bounded-step-radius}, we continue to use the following definition to characterize the distribution structure of state points in the semantic space. 
\begin{definition}\label{def:maximal-void}
    (Maximal void.) The maximal void $B_R^{\text{hole}}(s)$ is defined as the largest void hypersphere region within the bounded step radius. Formally, it is the largest hypersphere subspace of $B_R(s)$ that does not contain any state points: 
    $$B_{R}^{hole}(s) = \underset{B}{\arg\max} \, \left\{ r(B) \mid B \subseteq B_{R}(s)~\text{and}~B \cap \mathcal{S} = \emptyset \right\}, $$
    where $r(\cdot)$ is the radius of a ball and $\mathcal{S}$ is the reasoning state space. 
\end{definition}

Let $R_\text{hole}$ be the radius of $B_R^{\text{hole}}(s)$. We further show the relationship between $R_\text{hole}$ and the reasoning state density $\rho$. 
\begin{lemma}\label{lm:void-radius-and-density}
    (Maximal void radius and reasoning state density.) In a $D$-dimensional semantic space, the radius of the maximal void $R_\text{hole}$ and the density of reasoning state points $\rho(K)$ have the following relationship: 
    $$R_\text{hole} = \mathcal{O}\left(\rho(K)^{-\frac{1}{D}}\right).$$ 
\end{lemma}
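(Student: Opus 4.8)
The plan is to prove Lemma~\ref{lm:void-radius-and-density} by a volumetric/counting argument that relates the size of the largest empty ball to the typical spacing of points at density $\rho(K)$. The intuition is simple: if every point of $\mathcal{S}$ occupies (on average) a semantic volume of order $1/\rho(K)$, then a ball containing \emph{no} points cannot be much larger than this per-point volume allows, otherwise we would have ``wasted'' more volume than the density budget permits. Concretely, I would first fix the ambient region $B_R(s)$ from Assumption~\ref{asm:bounded-step-radius}, which has volume $\mathrm{vol}(B_R(s)) = C_D(R_\text{max}^D - R_\text{min}^D) =: V_R$, a constant independent of $K$. The number of reasoning states inside this region is, by Definition~\ref{def:reasoning-state-density}, of order $N := \rho(K)\cdot V_R = \Theta(\rho(K))$.

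The main step is a covering argument. Suppose for contradiction that $R_\text{hole} \gg \rho(K)^{-1/D}$, i.e.\ $R_\text{hole} \ge \omega(1)\cdot\rho(K)^{-1/D}$. The maximal void $B_R^{\text{hole}}(s)$ is, by Definition~\ref{def:maximal-void}, a ball of radius $R_\text{hole}$ that is disjoint from $\mathcal{S}$ and contained in $B_R(s)$. Now, the complement $B_R(s)\setminus B_R^{\text{hole}}(s)$ must contain \emph{all} $N$ states in the region. Its volume is $V_R - C_D R_\text{hole}^D$. For this to be nonnegative we need $R_\text{hole} \le (V_R/C_D)^{1/D}$, which already caps $R_\text{hole}$ by a constant; but to get the sharper $\rho(K)^{-1/D}$ scaling I would instead argue in the reverse direction using a packing estimate: around each of the $N$ state points, the balls of radius $R_\text{hole}/2$ are, by maximality of the void, \emph{not} all disjoint from the state set in a way that would let them tile $B_R(s)$ — more carefully, the collection of balls of radius $R_\text{hole}$ centered at the $N$ points covers $B_R(s)$ (if some point of $B_R(s)$ were at distance $>R_\text{hole}$ from every state, a void ball strictly larger than $B_R^{\text{hole}}(s)$ would exist, contradicting maximality, up to the boundary caveat). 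Hence $N\cdot C_D R_\text{hole}^D \ge \mathrm{vol}(B_R(s)) = V_R$, which rearranges to $R_\text{hole}^D \ge V_R/(N C_D) = \Theta(1/\rho(K))$. Wait — that gives a lower bound, not the upper bound we want; the covering direction yields $R_\text{hole} = \Omega(\rho(K)^{-1/D})$, so for the stated $\mathcal{O}$-bound I would instead invoke a \emph{packing} bound, or appeal directly to Lemma~\ref{lm:nearest-neighbor-distance-and-density} applied to a near-uniform (Poisson) model of the state distribution, under which the largest empty ball in a bounded region of $N=\Theta(\rho)$ points has expected radius $\Theta((\log N/\rho)^{1/D})$, and the $\log$ factor is absorbed into the $\mathcal{O}(\cdot)$ up to the convention that $\rho(K)=\Theta(c^K)$ makes $(\log\rho)^{1/D}=\Theta(K^{1/D})$ subpolynomial — this is the delicate point and I would state explicitly which regularity assumption on the point configuration makes the clean $\mathcal{O}(\rho(K)^{-1/D})$ hold (e.g.\ quasi-uniformity / bounded-density-fluctuation of $\mathcal{S}$ within $B_R(s)$).

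The plan's final step is to chain this with Theorem~\ref{thm:continuum-convergence-of-reasoning-level-state-space}: since $\mathbb{E}_{s_i\bowtie s_j}[\operatorname{dist}(s_i,s_j)]\propto\rho(K)^{-1/D}$ and $R_\text{hole}$ is, up to a constant (or subpolynomial) factor, of the same order as this nearest-neighbor spacing, we obtain $R_\text{hole}=\mathcal{O}(\rho(K)^{-1/D})$. I expect the main obstacle to be exactly the issue flagged above: a genuinely worst-case (adversarial) point configuration can have an empty ball as large as the whole region regardless of $N$, so the lemma is only true under a uniformity/regularity hypothesis on how the $\Theta(\rho(K))$ states are spread through $B_R(s)$. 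The cleanest route is therefore to model the state points as a homogeneous Poisson (or hardcore) process — consistent with the i.i.d.\ realization structure implied by Definition~\ref{def:number-of-realizations} and Assumption~\ref{asm:exponential-expressive-redundancy} — and to cite the known result that the maximal vacant ball in such a process scales like the inverse $D$-th root of the intensity (with a harmless logarithmic correction), which under $\rho(K)=\Theta(c^K)$ yields precisely the claimed bound.
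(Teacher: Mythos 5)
Your instinct is sound, and in fact you have put your finger on a real weakness that the paper's own proof also has. The paper's argument is short: it cites Lemma~\ref{lm:nearest-neighbor-distance-and-density} to get $\mathbb{E}_{s_i\bowtie s_j}[\operatorname{dist}(s_i,s_j)]\propto\rho(K)^{-1/D}$, then asserts that because ``the distribution of nearest-neighbor distances is concentrated around its mean,'' the \emph{maximal} local vacancy scales the same way, and concludes $R_\text{hole}=\mathcal{O}(\rho(K)^{-1/D})$. As you correctly suspect, this inference is not valid as stated: concentration of each individual nearest-neighbor distance does not imply the maximum over $N=\Theta(\rho)$ of them stays at the same scale. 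For a homogeneous Poisson process in a fixed bounded region, the radius of the largest empty ball is $\Theta\bigl((\log N/\rho)^{1/D}\bigr)$, not $\Theta(\rho^{-1/D})$; this is precisely the extreme-value correction you flag, and with $\rho(K)=\Theta(c^K)$ it is a genuine $K^{1/D}$ factor, so the clean $\mathcal{O}(\rho^{-1/D})$ bound does not follow without further hypotheses. Your observation that the covering argument only yields a \emph{lower} bound $R_\text{hole}=\Omega(\rho^{-1/D})$ is also right, and is a helpful sanity check that the upper bound requires more than volume counting.

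Where your proposal differs from the paper is that you make the missing hypothesis explicit: either one models $\mathcal{S}$ as a quasi-uniform or hard-core point set (bounded local density fluctuation) inside $B_R(s)$, in which case max-vacancy and mean spacing really are within a constant of each other and the stated bound holds; or one accepts the Poisson model and settles for $R_\text{hole}=\mathcal{O}\bigl(\rho(K)^{-1/D}\cdot(\log\rho(K))^{1/D}\bigr)$, which still vanishes as $K\to\infty$ and therefore still supports the downstream use in Lemmas~\ref{lm:convergence-of-angular-error} and~\ref{lm:convergence-of-magnitude-error} (both only need $R_\text{hole}\to 0$ and an explicit rate). The paper instead silently conflates the typical spacing with the worst-case spacing, which is the same gap you identified. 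In short: you did not reproduce the paper's proof, you found the hole in it. To turn your sketch into a correct proof, drop the contradiction/covering attempt (you already noted it aims the wrong way), state the regularity assumption on the point configuration up front, and then either cite the quasi-uniformity bound directly or carry the $(\log\rho)^{1/D}$ factor honestly through to Theorem~\ref{thm:convergence-of-continuum-errors}, where it does no harm.
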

\begin{proof}
The proof relies on the connection between the average nearest-neighbor distance and the scale of the largest voids in a spatial point process.

As established in Lemma~\ref{lm:continuum-convergence-of-reasoning-level-state-space}, which builds upon the theory of spatial point processes (Lemma~\ref{lm:nearest-neighbor-distance-and-density}), the expected distance between adjacent reasoning states in a $D$-dimensional semantic space is inversely related to the state density $\rho(K)$:
\begin{equation}
\mathbb{E}_{s_i \bowtie s_j}[\text{dist}(s_i, s_j)] \propto \rho(K)^{-1/D}. 
\end{equation}

The radius of the maximal void, $R_{\text{hole}}$, represents the largest possible distance from a point within that void to the nearest available reasoning state. It can thus be understood as a measure of the worst-case nearest-neighbor distance in a local region.

In the study of stochastic geometry, for a homogeneous point process, the distribution of nearest-neighbor distances is concentrated around its mean. Consequently, the scaling of the maximal values is of the same order as the scaling of the expected value.

Given that the expected distance is proportional to $\rho(K)^{-1/D}$, it follows that the maximal void radius, representing the worst-case local distance, must adhere to the same scaling law. Therefore, we can conclude that:
\begin{equation}
R_{\text{hole}} = \mathcal{O}\left(\rho(K)^{-\frac{1}{D}}\right). 
\end{equation}
This finishes the proof. 
\end{proof}

With Lemma~\ref{lm:void-radius-and-density}, we then commence to analyze the convergence of the two continuum errors, respectively. 

\textbf{Analysis of the Angular Error $\epsilon_A(s)$}

The angular error measures the difference in direction between discrete steps and continuous steps. Building upon the previous discussion, we can bound the angular error by the following result.

\begin{lemma}\label{lm:convergence-of-angular-error}
(Convergence of the Angular Error). The expected angular error $\mathbb{E}\left[\epsilon_A(s)\right]$ is bounded above by a function that is inversely related to the reasoning state density $\rho(K)$: 
$$ \mathbb{E}[\epsilon_A(s)] \le \mathcal{O}\left(\rho(K)^{-\frac{1}{D}}\right). $$
\end{lemma}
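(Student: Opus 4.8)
The plan is to bound the angular error $\epsilon_A(s)$ — the angle between $\vec{v}_{\text{real}}$ and $\vec{v}_{\text{ideal}}$ — by relating it to the geometric discrepancy between the endpoint of the ideal continuous step and the nearest available discrete state. First I would set up the geometry: starting from $s$, the ideal update lands at a point $p_{\text{ideal}} = s + \vec{v}_{\text{ideal}}$ on the continuous manifold $\tilde{\mathcal{S}}$, with $\|\vec{v}_{\text{ideal}}\| \ge R_{\text{min}}$ by Assumption~\ref{asm:bounded-step-radius}. The best discrete step $\vec{v}_{\text{real}}$ lands at the reasoning state $s' \in \mathcal{S}$ closest to $p_{\text{ideal}}$. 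By Definition~\ref{def:maximal-void} and Lemma~\ref{lm:void-radius-and-density}, the displacement between these endpoints is controlled by the maximal void radius: $\|p_{\text{ideal}} - s'\| \le R_{\text{hole}} = \mathcal{O}(\rho(K)^{-1/D})$.

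Next I would convert this endpoint discrepancy into an angular bound. Since $\vec{v}_{\text{real}} = s' - s$ and $\vec{v}_{\text{ideal}} = p_{\text{ideal}} - s$, the vector $\vec{v}_{\text{real}}$ differs from $\vec{v}_{\text{ideal}}$ by a perturbation of norm at most $R_{\text{hole}}$. Elementary trigonometry then gives $\sin \epsilon_A(s) \le \|p_{\text{ideal}} - s'\| / \|\vec{v}_{\text{ideal}}\| \le R_{\text{hole}} / R_{\text{min}}$, and using $\epsilon_A(s) \le C' \sin \epsilon_A(s)$ for angles bounded away from $\pi$ (or simply $\epsilon_A \le \arcsin(R_{\text{hole}}/R_{\text{min}})$, which is $\mathcal{O}(R_{\text{hole}})$ once $R_{\text{hole}} \le R_{\text{min}}$, guaranteed for large $K$), we obtain a pointwise bound $\epsilon_A(s) \le \mathcal{O}(\rho(K)^{-1/D})$. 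Taking expectations over the distribution of states $s$ preserves the bound since $R_{\text{min}}$ is a uniform constant, yielding $\mathbb{E}[\epsilon_A(s)] \le \mathcal{O}(\rho(K)^{-1/D})$.

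The main obstacle I anticipate is rigorously justifying the step from ``expected nearest-neighbor distance scales as $\rho^{-1/D}$'' (which is what Lemma~\ref{lm:nearest-neighbor-distance-and-density} actually delivers in expectation) to a \emph{worst-case} bound on $R_{\text{hole}}$ that holds with the same scaling — the current argument in Lemma~\ref{lm:void-radius-and-density} asserts this via concentration of nearest-neighbor distances in a Poisson process, but the tail of the largest void in a bounded region is genuinely heavier than the mean by a logarithmic factor in the number of points. A cleaner route is to work entirely with expectations: bound $\mathbb{E}[\epsilon_A(s)]$ directly by $\mathbb{E}[\|p_{\text{ideal}} - s'\|]/R_{\text{min}}$, and identify $\mathbb{E}[\|p_{\text{ideal}} - s'\|]$ with the expected nearest-neighbor distance $\mathbb{E}[R_1] \propto \rho(K)^{-1/D}$ from Lemma~\ref{lm:nearest-neighbor-distance-and-density}, thereby sidestepping the void-radius detour. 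A secondary subtlety is ensuring the local linearization of the manifold is valid at scale $R_{\text{max}}$, i.e. that curvature corrections are lower-order; I would handle this by invoking Assumption~\ref{asm:bounded-step-radius} to keep the step within a fixed small ball where $\tilde{\mathcal{S}}$ is well-approximated by its tangent space, absorbing curvature terms into the $\mathcal{O}(\cdot)$.
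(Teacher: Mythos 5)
Your proposal follows essentially the same route as the paper's proof: set up $p_{\text{ideal}} = s + \vec{v}_{\text{ideal}}$, identify the best achievable discrete state as the nearest point of $\mathcal{S}$ to $p_{\text{ideal}}$, bound their separation by $R_{\text{hole}}$, convert the endpoint displacement to an angular bound via small-angle trigonometry with $\|\vec{v}\| \ge R_{\text{min}}$ in the denominator, and take expectations using Lemma~\ref{lm:void-radius-and-density}. The structure matches step for step.

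Worth noting separately: your caveat about Lemma~\ref{lm:void-radius-and-density} is a real and correct observation, not a cosmetic concern. For a homogeneous Poisson process of intensity $\rho$ in a bounded region, the \emph{expected} nearest-neighbor distance is $\Theta(\rho^{-1/D})$, but the radius of the \emph{largest} empty ball genuinely picks up a $(\log N)^{1/D}$ factor where $N$ is the point count — the paper's appeal to ``concentration'' only controls typical voids, not the maximum. In this setting the gap is benign because $\rho(K) = \Theta(c^K)$ so the logarithmic correction is polynomial in $K$ and is swallowed by the exponential decay, but the bound $R_{\text{hole}} = \mathcal{O}(\rho(K)^{-1/D})$ as written is not literally correct. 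Your proposed fix — bound $\mathbb{E}[\epsilon_A(s)]$ directly by $\mathbb{E}[\|p_{\text{ideal}} - s'\|]/R_{\text{min}}$ and read this off as an expected empty-space distance (which, by Slivnyak's theorem, has the same law as the nearest-neighbor distance from Lemma~\ref{lm:nearest-neighbor-distance-and-density}) — is indeed the cleaner and more rigorous path, since it only ever needs the expectation, never a worst-case void radius.
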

\begin{proof}
The angular error $\epsilon_{A}(s)$ is defined as the angle between the ideal step vector $\vec{v}_{\text{ideal}}$ and the best achievable discrete step vector $\vec{v}_{\text{real}}$. This error is determined by the directional sparsity of available states in the semantic space.

The worst-case error occurs when the ideal vector $\vec{v}_{\text{ideal}}$ points directly towards the center of the maximal void, $B_{R}^{\text{hole}}(s)$, within the local neighborhood $B_{R}(s)$. Let the radius of this maximal void be $R_{\text{hole}}$. The ideal target state $p_{\text{ideal}} = s + \vec{v}_{\text{ideal}}$ is at the center of this void.

By definition, the best achievable subsequent state, $s_{j}^{*}$, must lie on or outside the boundary of this void. The vector difference $s_{j}^{*} - p_{\text{ideal}}$ represents the deviation from the ideal target.

Let's consider the triangle formed by the points $s$ (current state), $p_{\text{ideal}}$ (ideal target), and $s_{j}^{*}$ (best achievable state). The angular error $\epsilon_{A}(s)$ is the angle at vertex $s$. Using the small-angle approximation for sine, which is valid as the space becomes dense, we can bound the angle: 
\begin{equation}
\epsilon_{A}(s) \approx \sin(\epsilon_{A}(s)) \approx \frac{\|s_{j}^{*} - p_{\text{ideal}}\|}{\|\vec{v}_{\text{real}}\|} \le \frac{\|s_{j}^{*} - p_{\text{ideal}}\|}{R_\text{min}}. 
\end{equation}

In the worst-case scenario, the distance from the ideal point to the best real point $\|s_{j}^{*} - p_{\text{ideal}}\|$ is bounded by the properties of the void. This distance is on the order of the void's radius, $R_{\text{hole}}$. Since the step size $\|\vec{v}_{\text{real}}\|$ is bounded and non-zero, the angular error is directly proportional to the radius of the maximal void: 
\begin{equation}
\epsilon_{A}(s) = \mathcal{O}(R_{\text{hole}}). 
\end{equation}

Taking the expectation over all possible configurations of the state space, we have: 
\begin{equation}
\mathbb{E}[\epsilon_{A}(s)] = \mathcal{O}(\mathbb{E}[R_{\text{hole}}]). 
\end{equation}


By combining this with Lemma~\ref{lm:void-radius-and-density}, we establish the bound on the expected angular error: 
\begin{equation}
\mathbb{E}[\epsilon_{A}(s)] \le \mathcal{O}(\rho(K)^{-\frac{1}{D}}). 
\end{equation}

As established in Lemma~\ref{lm:continuum-convergence-of-reasoning-level-state-space}, the reasoning state density $\rho(K)$ grows exponentially with the total reasoning token amount $K$. Consequently, as $K \rightarrow \infty$, $\rho(K) \rightarrow \infty$, and thus $\lim_{K\rightarrow\infty} \mathbb{E}[\epsilon_{A}(s)] = 0$. This demonstrates that the direction of the best discrete step converges to the direction of the ideal continuous gradient.

This finishes the proof. 



\end{proof}

\textbf{Analysis of the Magnitude Error $\epsilon_M(s)$}

The magnitude error quantifies how well the length of the discrete step matches the length of the ideal step. This insight provides a direct way to bound this error. 

\begin{lemma}\label{lm:convergence-of-magnitude-error}
(Convergence of the Magnitude Error). The expected magnitude error $\mathbb{E}\left[\epsilon_M(s)\right]$ is also bounded above by a function that is inversely related to the reasoning state density $\rho(K)$:
$$ \mathbb{E}[\epsilon_M(s)] \le \mathcal{O}\left(\rho(K)^{-\frac{1}{D}}\right). $$
\end{lemma}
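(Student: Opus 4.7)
The plan is to mirror the strategy used for the angular error, but replace the directional argument with a length argument built on the reverse triangle inequality. The essence is that both error components are ultimately governed by the single geometric quantity $R_{\text{hole}}$ introduced in Definition~\ref{def:maximal-void}: once the ideal target point $p_{\text{ideal}} = s + \vec{v}_{\text{ideal}}$ is fixed inside the local neighborhood $B_R(s)$, the best discrete realization $s_j^{*}$ simply cannot be closer to $p_{\text{ideal}}$ than the radius of the worst surrounding void, regardless of whether we measure the deviation as an angle or as a length mismatch.

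Concretely, I would proceed as follows. First, write $\vec{v}_{\text{real}} = s_j^{*} - s$ and $\vec{v}_{\text{ideal}} = p_{\text{ideal}} - s$. Second, invoke the reverse triangle inequality to obtain
\begin{equation}
\epsilon_M(s) = \bigl|\|\vec{v}_{\text{real}}\| - \|\vec{v}_{\text{ideal}}\|\bigr| \le \|\vec{v}_{\text{real}} - \vec{v}_{\text{ideal}}\| = \|s_j^{*} - p_{\text{ideal}}\|.
\end{equation}
Third, adopt the worst-case configuration already exploited in Lemma~\ref{lm:convergence-of-angular-error}: when $p_{\text{ideal}}$ lies at (or near) the centre of a maximal void $B_R^{\text{hole}}(s)$, the closest valid discrete state must sit on the void's boundary, so $\|s_j^{*} - p_{\text{ideal}}\| = \mathcal{O}(R_{\text{hole}})$. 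Fourth, take expectations over configurations and plug in Lemma~\ref{lm:void-radius-and-density} to conclude
\begin{equation}
\mathbb{E}[\epsilon_M(s)] \le \mathcal{O}(\mathbb{E}[R_{\text{hole}}]) \le \mathcal{O}\bigl(\rho(K)^{-1/D}\bigr),
\end{equation}
which combined with the exponential growth of $\rho(K)$ from Theorem~\ref{thm:continuum-convergence-of-reasoning-level-state-space} gives the desired vanishing limit.

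The main obstacle, as in the angular case, is justifying the worst-case step from \textquotedblleft there exists a state within $R_{\text{hole}}$ of $p_{\text{ideal}}$\textquotedblright{} to \textquotedblleft the \emph{best} state $s_j^{*}$ is within $\mathcal{O}(R_{\text{hole}})$ of $p_{\text{ideal}}$.\textquotedblright{} This requires that the maximal void truly captures the local resolution of the discrete state set in the direction of $\vec{v}_{\text{ideal}}$, and that the selection rule defining $\vec{v}_{\text{real}}$ (best achievable discrete step) is compatible with minimising distance to $p_{\text{ideal}}$. I would handle this by noting that any reasonable notion of \textquotedblleft best\textquotedblright{} discrete step minimises the residual $\|s_j - p_{\text{ideal}}\|$ among neighbours in $B_R(s)$, so the void radius is an upper bound on this residual by the very definition of $R_{\text{hole}}$. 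A minor secondary point is ensuring the reverse triangle inequality is tight enough; since it costs only a constant factor it does not affect the $\mathcal{O}(\rho(K)^{-1/D})$ rate.

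Finally, I would close the proof by remarking that, as $K \to \infty$, Theorem~\ref{thm:continuum-convergence-of-reasoning-level-state-space} forces $\rho(K)^{-1/D} \to 0$, so $\mathbb{E}[\epsilon_M(s)] \to 0$. Together with Lemma~\ref{lm:convergence-of-angular-error}, this establishes Theorem~\ref{thm:convergence-of-continuum-errors}: both the directional and the length components of the continuum approximation error vanish, licensing the continuous-manifold view of CoT-Space used in subsequent sections.
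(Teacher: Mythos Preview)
Your proposal is correct and follows essentially the same route as the paper: reverse triangle inequality to reduce $\epsilon_M(s)$ to $\|s_j^{*}-p_{\text{ideal}}\|$, a worst-case void argument to bound this by $\mathcal{O}(R_{\text{hole}})$, and then Lemma~\ref{lm:void-radius-and-density} to obtain the $\mathcal{O}(\rho(K)^{-1/D})$ rate. The only cosmetic difference is that the paper bounds $\|s_j^{*}-p_{\text{ideal}}\|$ by the diameter $2R_{\text{hole}}$ rather than writing $\mathcal{O}(R_{\text{hole}})$, which is immaterial to the conclusion.
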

\begin{proof}
By the reverse triangle inequality, the magnitude error is bounded by the norm of the vector difference: 
\begin{equation}
\epsilon_M(s) = \left| \| \vec{v}_{\text{real}} \| - \| \vec{v}_{\text{ideal}} \| \right| \le \| \vec{v}_{\text{real}} - \vec{v}_{\text{ideal}} \|. 
\end{equation}

Let $p_{\text {ideal }}=s+\vec{v}_{\text {ideal }}$ be the ideal target state. The best achievable state $s_j^*$ is the valid state closest to $p_{\text {ideal }}$. The vector difference is then $\left\|\left(s_j^*-s\right)-\left(p_{\text {ideal }}-s\right)\right\|=\left\|s_j^*-p_{\text {ideal }}\right\|$. 

In the worst-case scenario, the ideal target $p_{\text {ideal }}$ falls within the largest void. The closest available state $s_j^*$ must lie on or outside the boundary of this void. The distance between any point inside a hypersphere and its boundary is at most its diameter, $2R_\text{hole}$. Therefore, the distance between the ideal target and the best real state is bounded: 
\begin{equation}
\|s_j^* - p_{\text{ideal}}\| \le 2R_{\text{hole}}. 
\end{equation}

Combining these steps, we get a bound for the magnitude error: 
\begin{equation}
\epsilon_M(s) \le 2R_{\text{hole}}. 
\end{equation}

Taking the expectation and applying Lemma~\ref{lm:void-radius-and-density}, we find that the expected magnitude error is bounded by the same scaling law as the angular error: 
\begin{equation}
\mathbb{E}[\epsilon_M(s)] \le 2 \mathbb{E}[R_{\text{hole}}] = \mathcal{O}\left(\rho(K)^{-1/D}\right). 
\end{equation}

As $K \rightarrow \infty$, $\rho(K) \rightarrow \infty$, and thus $\lim _{K \rightarrow \infty} \mathbb{E}\left[\epsilon_M(s)\right]=0$. 

This finishes the proof. 
\end{proof}


\subsection{Proof of Theorem~\ref{thm:g-inverse-prop-L}}
\label{app:proof-of-g-inverse-prop-L}
\begin{proof}
As previously established, when viewed from a reasoning-level perspective, the reasoning space can be considered approximately continuous, i.e., $s_o \in \mathcal{S}$ where $\mathcal{S}$ is a continuous space. The state is then optimized continuously based on the policy and reasoning loss. This perspective enables us to analyze the reasoning process through a loss-reduction lens. 

For an $L$-step CoT reasoning process, this can be viewed as an $L$-step optimization based on the gradient of the reasoning loss. A single step of this optimization can be written as:

\begin{equation}\label{eq:reasoning-update}
    \Delta s = -\frac{1}{L}\frac{\mathrm{d}}{\mathrm{d}s}\hat{C}, 
\end{equation}
where $\hat{C}$ is a noisy estimation of the reasoning loss, influenced by the policy and random sampling. We assume that $\hat{C}$ is a random variable with expectation $C$. Here, $L$ represents the total reasoning steps, or the CoT length. 

Despite the above gradient descent formulation, this optimization process can also be modeled as a stochastic differential equation (SDE) with $L$ steps: 
\begin{equation}\label{eq:reasoning-SDE}
    \frac{\mathrm{d}s}{\mathrm{d}t}=-\frac{\mathrm{d}C}{\mathrm{d}s}+\zeta(t),
\end{equation}
where $\zeta(t)$ is the noise term and $t = {1,\dots,L}$. 

Equations~(\ref{eq:reasoning-update}) and (\ref{eq:reasoning-SDE}) formulate the optimization process of CoT reasoning from the gradient update and SDE perspective, respectively. By combining the two equations related to noise, we can thus obtain the relationship between noise scale and CoT length, as shown in Theorem~\ref{thm:g-inverse-prop-L}. The detailed derivation of this result is provided below.

To bridge the discrete reasoning update with the continuous SDE model, we equate the variance induced by the noise term in both frameworks over a single reasoning step. 

First, we analyze the noise in the discrete one-step reasoning update from Equation (\ref{eq:reasoning-update}):
\begin{equation}
    \Delta s = -\frac{1}{L}\left(\frac{\mathrm{d}C}{\mathrm{d}s} + \left(\frac{\mathrm{d}\hat{C}}{\mathrm{d}s} - \frac{\mathrm{d}C}{\mathrm{d}s}\right)\right). 
\end{equation}
The noise originates from the estimation error, which we define as $\alpha = \left(\frac{\mathrm{d}\hat{C}}{ds} - \frac{\mathrm{d}C}{\mathrm{d}s}\right)$. We assume this error has a variance $\langle\alpha^2\rangle = F(s)$, where $F(s)$ is a function characterizing the magnitude of the reasoning loss estimation error at state $s$. The variance of the full noise term in the discrete update is therefore:
\begin{equation}\label{eq:update-variance}
    \left\langle \left(-\frac{1}{L}\alpha\right)^2 \right\rangle = \frac{1}{L^2}\langle\alpha^2\rangle = \frac{F(s)}{L^2}. 
\end{equation}

Next, we analyze the noise from the SDE model proposed in Equation (\ref{eq:reasoning-SDE}).

We assume the noise term $\zeta(t)$ is white noise with the statistical properties: $\langle\zeta(t)\rangle = 0$ and autocorrelation $\langle\zeta(t)\zeta(t')\rangle = \sigma F(s)\delta(t-t')$, where $g$ is the noise scale, $F(s)$ s a function characterizing the magnitude of the reasoning loss estimation error at state $s$, where $\sigma$ is the noise scale. To find the variance over a discrete step, we integrate the SDE over a time interval $\Delta t = 1/L$:
\begin{equation}
    \Delta s = \int_0^{1/L} \left(-\frac{\mathrm{d}C}{\mathrm{d}s} + \zeta(t)\right) \mathrm{d}t = -\frac{1}{L}\frac{\mathrm{d}C}{\mathrm{d}s} + \int_0^{1/L} \zeta(t) \mathrm{d}t. 
\end{equation}
The variance of the noise component of this update is:
\begin{equation}\label{eq:SDE-variance}
    \left\langle \left(\int_0^{1/L} \zeta(t) \mathrm{d}t\right)^2 \right\rangle = \int_0^{1/L} \int_0^{1/L} \langle\zeta(t)\zeta(t')\rangle \mathrm{d}t' \mathrm{d}t = \int_0^{1/L} \int_0^{1/L} \sigma F(s)\delta(t-t') \mathrm{d}t' \mathrm{d}t = \frac{\sigma F(s)}{L}. 
\end{equation}

By equating the variance from the discrete update (Eq. \ref{eq:update-variance}) and the SDE model (Eq. \ref{eq:SDE-variance}), we can solve for the noise scale $\sigma$:
\begin{equation}
    \frac{F(s)}{L^2} = \frac{\sigma F(s)}{L}. 
\end{equation}
Solving for $g$, we find:
\begin{equation}
    \sigma = \frac{1}{L}. 
\end{equation}
This result shows that the noise scale $g$ in our CoT-Space SDE model is inversely proportional to the CoT length $L$. 

This finishes the proof. 
\end{proof}

\subsection{Proof of Theorem~\ref{thm:noise-generalization-trade-off}}
\label{app:proof-of-noise-gen-trade-off}
\begin{proof}
We begin with the necessary assumption for our proof. 

\begin{assumption}
    (Simplified additive noise.) Let $\Delta_t = s_t - s_t^*$ be the deviation of the stochastic trajectory from the deterministic baseline at step $t$. The dynamics of this deviation are strictly given by
    \begin{equation}
        \Delta_t = \Delta_{t-1} - \eta [\nabla C(s_{t-1}, q) - \nabla C(s_{t-1}^*, q)] + \sigma \zeta_t.
    \end{equation}
    For the purpose of analyzing the raw exploration capacity (dispersion) provided by the noise term, we adopt the Simplified Additive Noise Assumption. We assume that for sufficiently small steps $\eta$ or within local regions where the gradient field is approximately constant (i.e., $\nabla C(s) \approx \nabla C(s^*)$), the contractive or expansive effects of the drift term difference are second-order compared to the diffusion term. Under this assumption, the deviation dynamics simplify to a random walk: 
    \begin{equation}
        \Delta_t \approx \Delta_{t-1} + \sigma \zeta_t. 
    \end{equation}
\end{assumption}

Consequently, the accumulated deviation at step $T$, denoted as $\Delta_T = s_T - s_T^*$, follows a centered Gaussian distribution with variance scaling linearly with the path length:
\begin{equation}
    \Delta_T \sim \mathcal{N}(0, T \sigma^2 I_d).
\end{equation}
This assumption allows us to explicitly characterize the ``search radius'' of the reasoning process as a function of the noise scale $\sigma$ and chain length $T$. 

We now state our main theorem, which bounds the expected population risk $R_{pop} \triangleq \mathbb{E}_{q, \zeta}[C(s_T, q)]$ in terms of the noise scale $\sigma$. 


We decompose the population risk into the empirical risk $R_{emp}$ and the generalization gap: $R_{pop} \leq R_{emp} + |R_{pop} - R_{emp}|$, where $R_{emp}(s) \triangleq \frac{1}{n} \sum_{i=1}^n C(s, q_i)$. We analyze the impact of noise $\sigma$ on each term under the expectation form separately. 

\begin{equation}
    \underbrace{\mathbb{E}_{q, \zeta}[R_{pop}(s_T)]}_{\text{Total Expected Risk}} = \underbrace{\mathbb{E}_{q, \zeta}[R_{emp}(s_T)]}_{\text{Expected Empirical Risk}} + \underbrace{\mathbb{E}_{q, \zeta}[R_{pop}(s_T) - R_{emp}(s_T)]}_{\text{Expected Generalization Gap}}
\end{equation}

We first consider bounding the empirical risk $\mathbb{E}_{q, \zeta}[R_{emp}(s_T)]$. 

Let $s_T^*$ denote the deterministic reasoning outcome in the $T$-th step (where $\sigma=0$) and $s_T$ be the stochastic outcome. The accumulated noise $\Delta_T = s_T - s_T^*$ follows $\Delta_T \sim \mathcal{N}(0, T\sigma^2 I_d)$ under a simplified additive noise assumption. Performing a second-order Taylor expansion of the loss around $s_T^*$:
\begin{equation}
    \mathbb{E}[R_{emp}(s_T)] = \mathbb{E}[R_{emp}(s_T^*+\Delta_T)] \approx R_{emp}(s_T^*) + \mathbb{E}[\nabla R_{emp}(s_T^*)^\top \Delta_T] + \frac{1}{2} \mathbb{E}[\Delta_T^\top \mathbf{H}_{emp}(s_T^*) \Delta_T],
\end{equation}
where $\mathbf{H}_{emp}(s) = \nabla^2 R_{emp}(s) = \nabla^2 \left( \frac{1}{N} \sum_{i=1}^N C(s, q_i) \right)$.

Due to the zero mean of the noise, the first-order term vanishes to zero: 
\begin{equation}
    \mathbb{E}[\nabla R_{emp}(s_T^*)^\top \Delta_T] = 0. 
\end{equation}

For the second-order term, we have:
\begin{equation}
\begin{aligned}
\mathbb{E}[\Delta_T^\top \mathbf{H}_{emp} \Delta_T] &= \mathbb{E}[\text{Tr}(\Delta_T^\top \mathbf{H}_{emp} \Delta_T)] \\
&= \mathbb{E}[\text{Tr}(\mathbf{H}_{emp} \Delta_T \Delta_T^\top)] \\
&= \text{Tr}(\mathbf{H}_{emp} \cdot \mathbb{E}[\Delta_T \Delta_T^\top]). 
\end{aligned}
\end{equation}

Recall that $\Delta_T \sim \mathcal{N}(0, T\sigma^2 I_d)$, we have: 
\begin{equation}
    \mathbb{E}[\Delta_T \Delta_T^\top] = T \sigma^2 I_d. 
\end{equation}

By the definition, 
\begin{equation}
    \mathbf{H}_{emp}(s) = \nabla^2 R_{emp}(s) = \nabla^2 \left( \frac{1}{n} \sum_{i=1}^n C(s, q_i) \right) = \frac{1}{n} \sum_{i=1}^n \nabla^2 C(s, q_i) \triangleq \frac{1}{n} \sum_{i=1}^n \mathbf{H}_i(s). 
\end{equation}

Using the $\beta$-smoothness condition, $\text{Tr}(\mathbf{H}_i(s_T^*)) \le d \beta$, thus: 
\begin{equation}
\begin{aligned}
\text{Tr}(\mathbf{H}_{emp}) &= \text{Tr}\left( \frac{1}{n} \sum_{i=1}^n \mathbf{H}_i \right) 
= \frac{1}{n} \sum_{i=1}^n \text{Tr}(\mathbf{H}_i)) \\
&\le \frac{1}{n} \sum_{i=1}^n (d \beta) = \frac{1}{n} \cdot n \cdot (d \beta) = d \beta. 
\end{aligned}
\end{equation}

Thus, the expected empirical risk increases by at most $\frac{\beta T \sigma^2 d}{2}$. Let $T=L$, this constitutes Term (I): 
\begin{equation}\label{eq:upper-bound-of-emprical-risk}
    \mathbb{E}[R_{emp}(s_L)] \le R_{emp}(s_L^*) + \frac{\beta L \sigma^2 d}{2}. 
\end{equation}

This term represents the Optimization Drift: higher noise requires the solution to lie in a flatter region of the loss landscape to maintain accuracy. 

We subsequently consider bounding the generalization gap. 

Following the information-theoretic framework~\cite{xu2017information}, the generalization gap is bounded by the mutual information between the input query data $Q$ and the final reasoning state $s_L$: $\mathbb{E}|R_{pop} - R_{emp}| \le \sqrt{\frac{2\Gamma^2}{n} I(s_L; Q)}$.
By the data processing inequality and the chain rule, $I(s_T; Q) \le \sum_{t=1}^L I(s_t; Q | s_{t-1})$. We invoke Lemma 4 from \citet{wang2022on}, which provides a tight bound for additive Gaussian noise channels:

\begin{lemma}
    (Upper bound for additive Gaussian noise channels~\cite{wang2022on}.) Let random variables $X,Y$ and $\Delta$ be independent of $N \sim \mathcal{N}(0, I_d)$. Then for any $\sigma > 0$, any $\mathbb{R}^d$-valued function $f$, and any random variable $\Omega \in \mathbb{R}^d$ that is a function of $Y$, we have 
    \begin{equation}
        I(f(Y+\Delta, X)+\sigma N ; X \mid Y) \leq \frac{d}{2} \mathbb{E}\left[\log \left(\frac{\mathbb{E}\left[\|f(Y+\Delta, X)-\Omega\|^2\right]}{d \sigma^2}+1\right)\right].
    \end{equation}
\end{lemma}

For a step $s_t = s_{t-1} - \eta \nabla C(s_{t-1}, q) + \sigma \zeta_t$, by mapping the query data $Q$ to $X$, the previous state $s_{t-1}$ to $Y$, and the drift term $-\eta \nabla C(s_{t-1}, Q)$ to the function $f(Y, X)$, we obtain: 

\begin{equation}
\begin{aligned}
I(s_t; Q | s_{t-1}) &\le \frac{d}{2} \log \left( 1 + \frac{\mathbb{E}_{Q}[\| -\eta \nabla C(s_{t-1}, Q) - \mathbb{E}[-\eta \nabla C] \|^2]}{d \sigma^2} \right) \\
&= \frac{d}{2} \log \left( 1 + \frac{\eta^2 \mathbb{V}_t(s_{t-1})}{d \sigma^2} \right),
\end{aligned}
\end{equation}

Summing over $t=1 \dots L$ yields Term (II): 

\begin{equation}\label{eq:upper-bound-of-generalization-error}
    \mathbb{E}|R_{pop} - R_{emp}| \le \sqrt{\frac{2\Gamma^2}{n} \sum_{t=1}^{L} \frac{d}{2} \log \left( 1 + \frac{\eta^2 \mathbb{E}[\mathbb{V}_t(s_{t-1})]}{d \sigma^2} \right)}. 
\end{equation}

This term is a decreasing function of $\sigma$, reflecting that noise masks the specific details of the input prompt, thereby preventing the reasoning trajectory from overfitting to prompt-specific shortcuts. 

Combining Equations~(\ref{eq:upper-bound-of-emprical-risk}) and~(\ref{eq:upper-bound-of-generalization-error}), this finishes the proof. 

\end{proof}

\subsection{Proof and Analysis of Theorem~\ref{thm:mi-reasoning-gen-err-upbd}}
\label{app:analysis-for-mi-reasoning-error-upbd}
To derive an upper bound for the reasoning generalization error, we can first simply adapt the PAC-Bayes framework, which provides a powerful link between generalization, policy complexity, and data volume. For our analysis within CoT-Space, we establish the following components: 
\begin{itemize}
    \item \textbf{Hypothesis Space $\mathcal{H}$:} The space of all possible reasoning policies, denoted by $\mathcal{H}$. Our learned policy $\pi \in \mathcal{H}$.
    \item \textbf{Prior Distribution $P$:} A distribution over the policy space $\mathcal{H}$ reflecting our initial beliefs about which policies are "good," often favoring simpler ones. 
    \item \textbf{Posterior Distribution $Q$:} A distribution over $\mathcal{H}$ that is learned from the training data queries $\{q_i\}_{i=1}^n$, concentrating on policies that perform well on that data.
\end{itemize}

By applying the PAC-Bayes theorem~\citep{mcallester1998some}, we obtain a bound on the expected true loss which, when rearranged, yields an upper bound on the expected generalization error. 

\begin{theorem}\label{thm:pac-bayes-upbd}
(PAC-Bayesian generalization upper bound for LLM reasoning.) With CoT-Space defined above, the policy $\pi$ is trained on an i.i.d. drawn training set with size $n$,by assuming the reasoning loss is bounded in $[0, 1]$, for any prior $P$, with probability at least $1-\delta$, the following holds for all posterior $Q$: 
$$
    \mathbb{E}_{\pi\sim Q}[R(\pi) - \hat{R}_n(\pi)] \leq \sqrt{\frac{\operatorname{KL}(Q||P) + \ln\frac{2n}{\delta}}{2n}}, 
$$
where $\operatorname{KL}(Q||P) = \mathbb{E}_{\pi\sim Q}[\ln\frac{Q(\pi)}{P(\pi)}]$ is the Kullback-Leibler (KL) divergence between the posterior and prior distributions.
\end{theorem}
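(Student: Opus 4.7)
The statement is a direct application of McAllester's PAC-Bayesian theorem to the policy space $\mathcal{H}$ of CoT-Space, so my plan is to adapt the classical three-step argument: (i) pointwise concentration via Hoeffding, (ii) change of measure via Donsker--Varadhan, and (iii) Jensen plus Markov to collect everything into the final square-root form. The only CoT-Space-specific ingredient is that for each fixed policy $\pi \in \mathcal{H}$ the random variables $C(s_{q_i}^\pi)$, $i = 1,\dots,n$, are i.i.d.\ and bounded in $[0,1]$ by the stated assumption, so that $\hat{R}_n(\pi)$ is an average of $n$ bounded i.i.d.\ variables with mean $R(\pi)$; once this is in place, the policy $\pi$ behaves exactly like a hypothesis in the usual PAC-Bayes setting.

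First I would apply Hoeffding's inequality in its moment-generating-function form to control, for each fixed $\pi$, the exponential deviation $\exp\!\bigl(2n\,(R(\pi) - \hat{R}_n(\pi))^2\bigr)$. The standard calculation gives
\begin{equation*}
    \mathbb{E}_{S}\bigl[\exp\!\bigl(2n(R(\pi) - \hat{R}_n(\pi))^2\bigr)\bigr] \le 2n,
\end{equation*}
after which Fubini lets me push the expectation over the prior $P$ inside, yielding $\mathbb{E}_S \mathbb{E}_{\pi \sim P}[\exp(2n(R(\pi) - \hat{R}_n(\pi))^2)] \le 2n$. A single application of Markov's inequality to this nonnegative random variable (over the draw of $S$) then produces, with probability at least $1 - \delta$, the high-probability statement $\mathbb{E}_{\pi \sim P}[\exp(2n(R(\pi) - \hat{R}_n(\pi))^2)] \le 2n/\delta$.

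Next I would invoke the Donsker--Varadhan variational characterization of KL divergence, which states that for any measurable $\phi$ and any $Q \ll P$,
\begin{equation*}
    \mathbb{E}_{\pi \sim Q}[\phi(\pi)] \le \operatorname{KL}(Q \| P) + \ln \mathbb{E}_{\pi \sim P}[\exp(\phi(\pi))].
\end{equation*}
Applying this with $\phi(\pi) = 2n(R(\pi) - \hat{R}_n(\pi))^2$ and substituting the high-probability bound from the previous step gives
\begin{equation*}
    \mathbb{E}_{\pi \sim Q}\bigl[2n(R(\pi) - \hat{R}_n(\pi))^2\bigr] \le \operatorname{KL}(Q \| P) + \ln \tfrac{2n}{\delta}.
\end{equation*}
Jensen's inequality applied twice (first $x \mapsto x^2$ to move from the squared deviation to the squared expectation, then $x \mapsto \sqrt{x}$ after dividing by $2n$) completes the derivation and yields exactly the bound in the theorem statement.

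The main obstacle I anticipate is bookkeeping around the change-of-measure step: Donsker--Varadhan has to hold uniformly over all posteriors $Q$ simultaneously, which is precisely why Markov is applied to the $P$-integrated MGF before introducing $Q$, rather than to a $Q$-dependent quantity. The subtlety is to make sure the ``with probability $1 - \delta$'' event is measurable with respect to the sample $S$ alone and does not secretly depend on $Q$; once this is handled correctly the remaining algebra (getting the constant inside the log to be $2n$ rather than $2\sqrt{n}$ or $n$) is just a matter of which Hoeffding-type MGF bound one chooses, and can be tuned to match the statement exactly.
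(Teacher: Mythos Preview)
Your plan is correct, and in fact considerably more substantive than what the paper does. The paper's proof does not rederive anything: it simply records McAllester's PAC-Bayes theorem as a black-box lemma (Lemma~\ref{lm:pac-bayes-gen-upbd}), observes that in CoT-Space the per-query losses $C(s_{q_i}^\pi)$ are i.i.d.\ and bounded once normalized, and declares the theorem an instance of that lemma. Your Hoeffding\,/\,Donsker--Varadhan\,/\,Markov outline is precisely the standard route to \emph{proving} McAllester's inequality, so you are supplying a self-contained argument where the paper is content with a citation. What your approach buys is transparency about where the $\ln(2n/\delta)$ term originates and why the high-probability event is uniform in $Q$ (a point you correctly flag as the only delicate step); what the paper's approach buys is brevity, since nothing in the CoT-Space formalism alters the classical argument beyond identifying $\pi$ with a hypothesis and $C(s_q^\pi)$ with a bounded loss. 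Your caveat about the exact constant inside the logarithm is well placed: the specific $2n$ depends on which Hoeffding-type MGF lemma one invokes, and the paper does not pin this down either.
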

\begin{proof}
Let's denote the true reasoning loss of a policy $\pi$ as $R(\pi) = \mathbb{E}_{q\sim\mathcal{D}}[C(s_q^\pi,q)]$ and the empirical reasoning loss as $\hat{R}_n(\pi) = \frac{1}{n}\sum_{i=1}^n C(s_{q_i}^\pi,q_i)$. The reasoning generalization error for a specific policy $\pi$ is $R(\pi) - \hat{R}_n(\pi)$. We aim to bound this gap.

The PAC-Bayes theorem provides a bound on the expected true loss for policies drawn from the posterior $Q$. A standard version of the theorem is often presented as follows. 

\begin{lemma} \label{lm:pac-bayes-gen-upbd}
(PAC-Bayes generalization upper bound~\citep{mcallester1998some}.) For any loss function bounded in $[0, 1]$, for any prior $P$, with probability at least $1-\delta$ over the draw of the training set $\{q_i\}_{i=1}^n$, the following holds for all posterior distributions $Q$:
$$
    \mathbb{E}_{\pi\sim Q}[R(\pi)] \leq \mathbb{E}_{\pi\sim Q}[\hat{R}_n(\pi)] + \sqrt{\frac{KL(Q||P) + \ln\frac{2n}{\delta}}{2n}}, 
$$
where $\operatorname{KL}(Q||P) = \mathbb{E}_{\pi\sim Q}[\ln\frac{Q(\pi)}{P(\pi)}]$ is the Kullback-Leibler (KL) divergence between the posterior and prior distributions.
\end{lemma}

Let's adapt Lemma~\ref{lm:pac-bayes-gen-upbd} to our framework. We first need to normalize our reasoning loss $C(\cdot,\cdot)$ to be in $[0, 1]$. Assuming there is a maximum possible loss $C_{max}$, we can define a normalized loss $C'(\cdot,\cdot) = C(\cdot,\cdot)/C_{max}$. The bound then applies to the normalized losses $R'(\pi)$ and $\hat{R}'_n(\pi)$. For simplicity of notation, let's assume $C(\cdot,\cdot)$ is already normalized in $[0, 1]$. 

This finishes the proof. 
\end{proof}

The bound has a clear information-theoretic meaning, which aligns with the goal of understanding reasoning phenomena:

    \textbf{Complexity Term ($\operatorname{KL}(Q||P)$):} The KL divergence term measures the information gain in moving from our prior belief $P$ to our posterior belief $Q$ after seeing the data. It quantifies the complexity of the learned policy distribution. If $Q$ is very different from $P$ (i.e., $\operatorname{KL}(Q||P)$ is large), it means we had to learn a great deal from the training data to find a good policy. This is analogous to overfitting or memorizing the training set. The bound becomes looser (larger), penalizing this complexity and suggesting a higher risk of poor generalization. If $Q$ remains close to $P$ ($\operatorname{KL}(Q||P)$ is small), it implies the learned policy did not stray far from our initial bias towards simpler policies. The bound is tighter, suggesting better generalization. 

    \textbf{Data Term ($\frac{1}{n}$):} The bound depends on the number of training queries $n$. As $n$ increases, the term under the square root decreases, making the bound tighter. This confirms the intuition that having more data leads to better generalization and reduces the gap between training and test performance.

In conclusion, this PAC-Bayes bound formalizes the trade-off in learning a reasoning policy. To achieve good generalization, a policy $\pi$ must not only achieve a low empirical reasoning loss $\hat{R}_n(\pi)$ on the queries it has seen, but it must also remain simple in an information-theoretic sense, by not deviating too much from a reasonable prior. This framework provides a principled way to analyze and potentially control for phenomena like "overthinking" by viewing it as a form of overfitting, which would correspond to a large $\operatorname{KL}(Q||P)$ term. 

Theorem \ref{thm:pac-bayes-upbd} provides a powerful, high-probability upper bound on the expected reasoning generalization error for policies sampled from the learned posterior $Q$. While this PAC-Bayesian bound is a valid, generic result, we can derive a more specific bound by using information theory, a perspective that connects the generalization error to the mutual information between the learned policy and the training data. This approach leads to the information-theoretic upper bound in Theorem~\ref{thm:mi-reasoning-gen-err-upbd}. 

We continue to provide the proof of Theorem~\ref{thm:mi-reasoning-gen-err-upbd} here. 

\begin{proof}
The core idea is that a policy can only overfit to the training set $S = \{q_1, \dots, q_n\}$ by encoding information about $S$ within its own parameters. The mutual information $I(\pi; S)$ precisely quantifies this amount of encoded information. A key result from learning theory provides the following bound on the expected generalization error:

\begin{lemma}\label{lm:mi-gen-upbd-base}
(Information-theoretical generalization upper bound~\citep{xu2017information}.) With CoT-Space defined above, if the policy $\pi$ is trained on an i.i.d. drawn training set $S$ with size $n$, the following upper bound for the reasoning generalization error holds:
$$
    |\mathbb{E}[R(\pi) - \hat{R}_n(\pi)]| \leq \sqrt{\frac{C_{max}^2 \cdot I(\pi; S)}{2n}}, 
$$
where $C_{max}$ is the maximum possible value of the reasoning loss $C(\cdot)$, and the expectation is over the random draw of the training set $S$ and the (potentially stochastic) output $\pi$ of the learning algorithm. 
\end{lemma}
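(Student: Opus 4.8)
The statement is the standard information-theoretic generalization bound of Xu \& Raginsky~\citep{xu2017information}, specialized to a loss bounded in $[0,C_{max}]$, so the plan is to reproduce that argument in the notation of CoT-Space. Write $\ell(\pi,q) := C(s_q^\pi)$ for the per-query reasoning loss, so that $R(\pi)=\mathbb{E}_{q\sim\mathcal{D}}[\ell(\pi,q)]$ and $\hat{R}_n(\pi)=\tfrac1n\sum_{i=1}^n\ell(\pi,q_i)$. The first step is a boundedness-to-subgaussianity reduction: since $0\le\ell(\pi,q)\le C_{max}$, Hoeffding's lemma gives that for every fixed policy $w$ the centered variable $\ell(w,q)-R(w)$ (with $q\sim\mathcal{D}$) is $\sigma$-sub-Gaussian with $\sigma=C_{max}/2$, i.e. $\mathbb{E}_q[e^{\lambda(\ell(w,q)-R(w))}]\le e^{\lambda^2\sigma^2/2}$ for all $\lambda\in\mathbb{R}$, \emph{uniformly in} $w$.

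Second, I would introduce the decoupling device. Let $P_{\pi,S}$ denote the true joint law of the learned policy and the training set, and let $P_\pi\otimes P_S$ be the product of its marginals, under which $\pi$ is independent of the data. Define the centered empirical average $\tilde{g}(\pi,S):=\tfrac1n\sum_{i=1}^n\bigl(\ell(\pi,q_i)-R(\pi)\bigr)$. A direct computation shows $\mathbb{E}_{P_{\pi,S}}[\tilde{g}]=\mathbb{E}[\hat{R}_n(\pi)]-\mathbb{E}[R(\pi)]=-\mathbb{E}[R(\pi)-\hat{R}_n(\pi)]$, which is (minus) the quantity to be bounded. Moreover, conditioning on $\pi=w$ under the product measure, $\tilde{g}$ is an average of $n$ i.i.d.\ centered $\sigma$-sub-Gaussian terms, so $\mathbb{E}_{P_\pi\otimes P_S}[e^{\lambda\tilde{g}}\mid\pi=w]\le e^{\lambda^2\sigma^2/(2n)}$; since this bound does not depend on $w$, it survives the expectation over $\pi$, giving $\mathbb{E}_{P_\pi\otimes P_S}[e^{\lambda\tilde{g}}]\le e^{\lambda^2\sigma^2/(2n)}$ for all $\lambda$.

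Third, I would apply the Donsker--Varadhan variational representation of relative entropy: for measures $P\ll Q$ and any $h$ with finite $Q$-exponential moment, $\mathbb{E}_P[h]\le\KL(P\,\|\,Q)+\log\mathbb{E}_Q[e^{h}]$. Taking $P=P_{\pi,S}$, $Q=P_\pi\otimes P_S$, $h=\lambda\tilde{g}$, using $\KL(P_{\pi,S}\,\|\,P_\pi\otimes P_S)=I(\pi;S)$ and the sub-Gaussian moment bound from the previous step, yields $\lambda\,\mathbb{E}_{P_{\pi,S}}[\tilde{g}]\le I(\pi;S)+\lambda^2\sigma^2/(2n)$ for every real $\lambda$. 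Rearranging and applying this with $\lambda$ of both signs gives $|\mathbb{E}[R(\pi)-\hat{R}_n(\pi)]|\le\inf_{\lambda>0}\bigl(I(\pi;S)/\lambda+\lambda\sigma^2/(2n)\bigr)=\sqrt{2\sigma^2 I(\pi;S)/n}$, and substituting $\sigma=C_{max}/2$ gives exactly $\sqrt{C_{max}^2\,I(\pi;S)/(2n)}$.

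The hard part will be the bookkeeping in the decoupling step: one must verify that $\mathbb{E}[R(\pi)]$ is precisely the expectation of the per-query loss under the \emph{product} measure $P_\pi\otimes P_S$ (so the policy does not "see" the query), that $\mathbb{E}[\hat{R}_n(\pi)]$ is the expectation under the true joint, and that the relative entropy appearing in Donsker--Varadhan between exactly these two measures is the mutual information $I(\pi;S)$. The only other point requiring care is that the sub-Gaussian log-MGF bound hold uniformly over $w$ so that it can be pulled outside the expectation over $\pi$; this is automatic here since $C_{max}$ does not depend on $w$.
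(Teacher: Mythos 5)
Your proposal is correct and is exactly the standard Xu--Raginsky argument (Hoeffding's lemma to get $\sigma = C_{max}/2$ sub-Gaussianity, decoupling via the product of marginals, Donsker--Varadhan with $h=\lambda\tilde g$, and optimization over $\lambda$ with both signs), with the constants working out to precisely $\sqrt{C_{max}^2 I(\pi;S)/(2n)}$. The paper does not prove this lemma at all --- it imports it as a black box with a citation to \citet{xu2017information} --- so your derivation supplies the proof that the citation points to rather than taking a different route.
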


Lemma~\ref{lm:mi-gen-upbd-base} transforms the problem of bounding generalization error into the problem of bounding the mutual information $I(\pi; S)$. To make this bound meaningful for our framework, we must connect $I(\pi; S)$ to the structural properties of the generated Chain-of-Thought.

We then turn to bound mutual information by CoT complexity. 
The policy $\pi$ expresses the information it has learned from $S$ by generating a CoT, which is a sequence of reasoning steps $\xi = (\xi_1, \xi_2, \dots, \xi_L)$. Each step $\xi_i$ is, in turn, a sequence of tokens. The total information capacity of the policy's output is therefore determined not just by the number of steps, but by the total number of tokens generated. 

Let us define:
\begin{itemize}
    \item $L$: The number of reasoning steps in a CoT.
    \item $|\xi_i|$: The number of tokens in the $i$-th reasoning step.
    \item $\sum_{i=1}^L |\xi_i|$: The total number of tokens in the CoT.
\end{itemize}

The mutual information $I(\pi; S)$ is upper-bounded by the entropy of the policy's output, which represents its description length or information capacity. The total number of token-level decisions the policy makes determines this capacity. Let $\mathbb{E}[L]$ be the expected number of reasoning steps (reasoning depth) and $\mathbb{E}[|\xi|]$ be the expected number of tokens per step (step verbosity). The expected total number of tokens is $\mathbb{E}[L] \cdot \mathbb{E}[|\xi|]$.

If we assume the policy generates tokens from a dictionary $\mathcal{A}$, the information required to specify one token is at most $\log|\mathcal{A}|$. Therefore, we can establish the following upper bound on the mutual information.

\begin{lemma}\label{lm:mi-upbd}
In LLM reasoning scenarios, if the policy $\pi$ is trained on a training set $S$, the mutual information $I(\pi;S)$ is upper bounded as follows: 
$$
    I(\pi; S) \le \mathbb{E}_{S, \pi}\left[\sum_{i=1}^L |\xi_i|\right] \cdot \log|\mathcal{A}| = \left(\mathbb{E}[L] \cdot \mathbb{E}[|\xi|]\right) \cdot \log|\mathcal{A}|. 
$$
\end{lemma}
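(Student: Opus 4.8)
The plan is to bound the mutual information by an output entropy and then count token-level configurations. First I would observe that everything the learning algorithm retains about the training set $S$ is expressed through the reasoning trajectories the policy emits. Writing $\bm{\xi} = (\xi_1,\dots,\xi_L)$ for the generated Chain-of-Thought on a representative query — equivalently, the concatenation of all token-level decisions the policy makes — the Markov chain $S \to \pi \to \bm{\xi}$ together with the data-processing inequality gives $I(\pi; S) \le I(\bm{\xi}; S) \le H(\bm{\xi})$. More directly, since the loss terms $R(\pi)$ and $\hat R_n(\pi)$ that appear in Lemma~\ref{lm:mi-gen-upbd-base} depend on $\pi$ only through the end state $s_q^\pi = (q,\bm{\xi})$, the effective description length controlling generalization is $H(\bm{\xi})$, so it suffices to bound this entropy.

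Second, I would handle the randomness of the total token count $N := \sum_{i=1}^{L}|\xi_i|$. By the chain rule, $H(\bm{\xi}) \le H(N) + H(\bm{\xi}\mid N)$; conditioning on $N=n$ there are at most $|\mathcal{A}|^{n}$ sequences of length $n$, so $H(\bm{\xi}\mid N=n) \le n\log|\mathcal{A}|$, and taking expectation over $N$ yields $H(\bm{\xi}\mid N) \le \mathbb{E}[N]\cdot\log|\mathcal{A}| = \mathbb{E}_{S,\pi}\!\big[\sum_{i=1}^{L}|\xi_i|\big]\cdot\log|\mathcal{A}|$. The residual $H(N)$ is lower order — at most logarithmic in the token budget $H$ — and I would either drop it using the boundedness of $H$ or absorb it into the constant, matching the form stated in the lemma.

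Third, I would convert the expected total token count into the product form asserted on the right-hand side. Treating the per-step verbosities $|\xi_i|$ as having a common mean $\mathbb{E}[|\xi|]$ and taking the step count $L$ to be a stopping time independent of (or adapted appropriately to) the step lengths, Wald's identity gives $\mathbb{E}\big[\sum_{i=1}^{L}|\xi_i|\big] = \mathbb{E}[L]\cdot\mathbb{E}[|\xi|]$, completing the bound $I(\pi;S) \le (\mathbb{E}[L]\cdot\mathbb{E}[|\xi|])\log|\mathcal{A}|$.

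The main obstacle is the first step: rigorously justifying that the policy's dependence on $S$ factors through its finite token-level output, since $\pi$ is formally a parameter vector rather than its emissions. I would resolve this by reinterpreting "the policy" as its stochastic input–output behavior on the relevant query distribution, so that $H(\bm{\xi})$ is genuinely the quantity feeding Lemma~\ref{lm:mi-gen-upbd-base}; this is exactly the "description length / information capacity" reading used informally in the surrounding text. A secondary subtlety is that Wald's identity requires $L$ to be a stopping time for the token stream with i.i.d.\ step lengths, which I would flag as a modeling assumption consistent with how $L$ and step verbosity were introduced, rather than a derived fact.
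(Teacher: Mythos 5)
Your first step has the data-processing inequality pointing the wrong way. For the Markov chain $S \to \pi \to \bm{\xi}$, DPI yields $I(\bm{\xi}; S) \le I(\pi; S)$, the opposite of what you wrote --- passing from the learned parameters to a single emitted trajectory can only \emph{lose} information about $S$, and the trained policy could in principle store far more about $S$ than any one CoT reveals, so $I(\pi;S)\le H(\bm{\xi})$ does not follow from DPI. That leaves your ``more direct'' fallback as the load-bearing step, and that fallback is in fact exactly the paper's own argument: the paper asserts, without derivation, that ``the information a policy learns from data cannot exceed the information content it is capable of expressing in its output,'' i.e., it replaces $\pi$ by its emitted CoT because the loss factors through it. Both you and the paper leave the same gap here --- the policy's behavior is a family of outputs across all queries, not one representative $\bm{\xi}$, and neither argument formally reduces $I(\pi;S)$ to $H(\bm{\xi})$.

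Beyond that first step you are more careful than the paper. The paper simply multiplies the expected token count by $\log|\mathcal{A}|$; you handle the random total length $N=\sum_i|\xi_i|$ properly via the chain rule, $H(\bm{\xi})\le H(N)+H(\bm{\xi}\mid N)\le H(N)+\mathbb{E}[N]\log|\mathcal{A}|$, and note that $H(N)$ is lower order. You also flag that the product form $\mathbb{E}[\sum_i|\xi_i|]=\mathbb{E}[L]\,\mathbb{E}[|\xi|]$ rests on a Wald-type stopping-time/independence assumption, whereas the paper treats that equality as a notational identity. So: delete or reverse the DPI sentence (and remark that it does not give the needed direction), lean explicitly on reinterpreting $\pi$ as its input--output behavior, and you have essentially the paper's argument with the loose steps tightened.
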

Lemma~\ref{lm:mi-upbd} formalizes the intuition that the information a policy learns from data cannot exceed the information content it is capable of expressing in its output. 

Together with Lemma~\ref{lm:mi-gen-upbd-base}, this finishes the proof. 
\end{proof}

\subsection{Proof of Theorem~\ref{thm:empirical-risk-lwbd}}
\label{app:proof-for-empirical-risk-lwbd}
\begin{proof}

To formalize this, we quantify the notion of problem complexity and establish a condition for policy failure. 

\begin{definition}\label{def:required-reasoning-depth}(Required reasoning depth.) 
For any given query $q$, we define its \textit{Required Reasoning Depth}, denoted as $L^*(q)$, as the minimum number of CoT steps required to reach any success state $m \in M_q$ from the initial state $(q, \emptyset)$. This value is an intrinsic property of the query's complexity.
\end{definition}

Definition~\ref{def:required-reasoning-depth} can be likened to the minimum number of steps required to solve a reasoning problem. For instance, a simple two-digit addition problem might require only one or two mental steps, making its $L^{*}$ very low. In contrast, proving a complex theorem, such as the Pythagorean theorem, requires a sequence of many interconnected logical steps, resulting in a high $L^{*}$. A policy that attempts to solve the theorem with only a few steps, regardless of how smart it is, will inevitably fail. 

Building on the concept of $L^{*}$, we now formalize the direct consequence of a policy's generated reasoning being too shallow. We make a straightforward assumption about the relationship between insufficient reasoning depth and policy failure, which translates a structural property of the CoT into a quantifiable error. 

\begin{assumption} (Policy failure on insufficient depth)\label{asm:policy-failure-on-insufficient-depth}
When a policy $\pi$ generates a CoT of length $L_i$ for a query $q_i$, if the length is insufficient to solve the problem, i.e., $L_i < L^*(q_i)$, the policy is guaranteed to fail. The resulting reasoning loss $C(s_{q_i}^\pi)$ is therefore bounded below by a significant positive constant, which we denote as $C_{fail} > 0$. In the worst case, this loss can be as high as $C_{max}$.
\end{assumption}

Assumption~\ref{asm:policy-failure-on-insufficient-depth} claims that a policy's failure due to insufficient depth can be observed in practice. Consider a complex physics problem requiring multiple steps: identifying the forces, setting up equations, and solving for variables. A policy that only completes the first step and then stops will not be able to reach the correct final answer. In this case, the loss function would assign a high penalty (our $C_{fail}$) because the generated CoT, while possibly correct for the initial step, is fundamentally incomplete for solving the problem. The policy, therefore, cannot even fit the training data for problems that require deeper reasoning.

Building upon Definition~\ref{def:required-reasoning-depth} and Assumption~\ref{asm:policy-failure-on-insufficient-depth}, we can then derive a lower bound of the empirical reasoning loss $\hat{R}_n(\pi)$. 

We begin with its definition:
\begin{equation}
    \hat{R}_n(\pi) = \frac{1}{n}\sum_{i=1}^n C(s_{q_i}^\pi). 
\end{equation}
We can partition the sum over the training set into two disjoint subsets based on whether the generated CoT length was sufficient:
\begin{itemize}
    \item The set of failed queries: $S_{fail} = \{q_i \in S \mid L_i < L^*(q_i)\}$. 
    \item The set of potentially successful queries: $S_{succ} = \{q_i \in S \mid L_i \ge L^*(q_i)\}$. 
\end{itemize}
Let $n_{fail} = |S_{fail}|$. We can now rewrite the empirical loss as:
\begin{equation}
    \hat{R}_n(\pi) = \frac{1}{n} \left( \sum_{q_i \in S_{fail}} C(s_{q_i}^\pi) + \sum_{q_i \in S_{succ}} C(s_{q_i}^\pi) \right). 
\end{equation}
Applying our failure assumption, we know that for every $q_i \in S_{fail}$, $C(s_{q_i}^\pi) \ge C_{fail}$. For queries in $S_{succ}$, the loss is at least $0$. This allows us to establish a lower bound:
\begin{equation}
    \hat{R}_n(\pi) \ge \frac{1}{n} \left( \sum_{q_i \in S_{fail}} C_{fail} + \sum_{q_i \in S_{succ}} 0 \right) = \frac{n_{fail} \cdot C_{fail}}{n}. 
\end{equation}
We can define the empirical failure rate of the policy $\pi$ on the training set as $P_\pi(L < L^*) = n_{fail}/n$. This leads to our final lower bound for the empirical loss:
\begin{equation}
    \hat{R}_n(\pi) \ge P_\pi(L < L^*) \cdot C_{fail}. 
\end{equation}
This finishes the proof. 
\end{proof}
\section{Experimental Settings}
\label{app:experimental-settings}
\textbf{Dataset}   The experiments are conducted on the MATH~\citep{hendrycks2021measuring} and GSM8K~\citep{cobbe2021training} benchmarks.  The MATH training set is manually subdivided into five difficulty levels (Level 1–5) according to the original ``level'' field, where Level 5 is substantially more challenging than Level 1.

\textbf{Base Model}   A diverse collection of pre-trained LLMs with varying architectures, parameter counts, and versions serves as the reference model for the reinforcement-learning algorithms. Specifically, the models include
\begin{itemize}
    \item \textbf{Qwen2.5 series}~\citep{qwen2025qwen25technicalreport}: Qwen2.5-7B, Qwen2.5-3B, Qwen2.5-1.5B, and Qwen2.5-0.6B. 
    \item \textbf{Qwen3 series}~\citep{yang2025qwen3technicalreport}: Qwen3-8B-Base, Qwen3-4B-Base, Qwen3-1.7B-Base, and Qwen3-0.6B-Base. 
    \item \textbf{Llama3 series}~\citep{grattafiori2024llama3herdmodels}: Llama-3.1-8B-Instruct, Llama-3.2-3B-Instruct, and Llama-3.2-1B-Instruct. 
\end{itemize}

The maximum response length is set to 2,048 tokens for the Qwen2.5 and Llama3 models and to 8,196 tokens for the reasoning-intensive Qwen3 models.

\textbf{Algorithm and Hyperparameters}   The evaluation covers the current mainstream reinforcement-learning algorithms for mathematical reasoning: GRPO~\citep{shao2024deepseekmath}, PPO~\citep{schulman2017proximal}, DAPO~\citep{yu2025dapo}, Reinforce++~\citep{hu2025reinforce++}, and RLOO~\citep{ahmadian2024back}. The reward function is rule-based~\citep{deepseekai2025deepseekr1incentivizingreasoningcapability}. 
For PPO, the critic model is initialized from Qwen2.5-0.6B. The learning rate for critic training is $1 \times 10^{-5}$, the k3 KL-regularization coefficient is 0.001, and the GAE parameters $\lambda$ and  $\gamma$ are both set to $1$. For DAPO, no KL regularization is applied; the high and low clip ratios are $0.28$ and $0.20$, respectively, and $L_{cache}$ is $1,024$.
Across all algorithms, each training sample is sampled eight times ($K = 8$). The actor is optimized with AdamW~\citep{loshchilov2017decoupled} at a learning rate of 1e-6, the training batch size is $128$, and training proceeds in a purely on-policy manner for up to 50 steps.  All experiments are run on 8 $\times$ 96 GB NVIDIA H20 GPUs. 
\section{More Experiments}
\label{app:more-experiments}

\begin{figure}[tp]
    \centering
    \includegraphics[width=\linewidth]{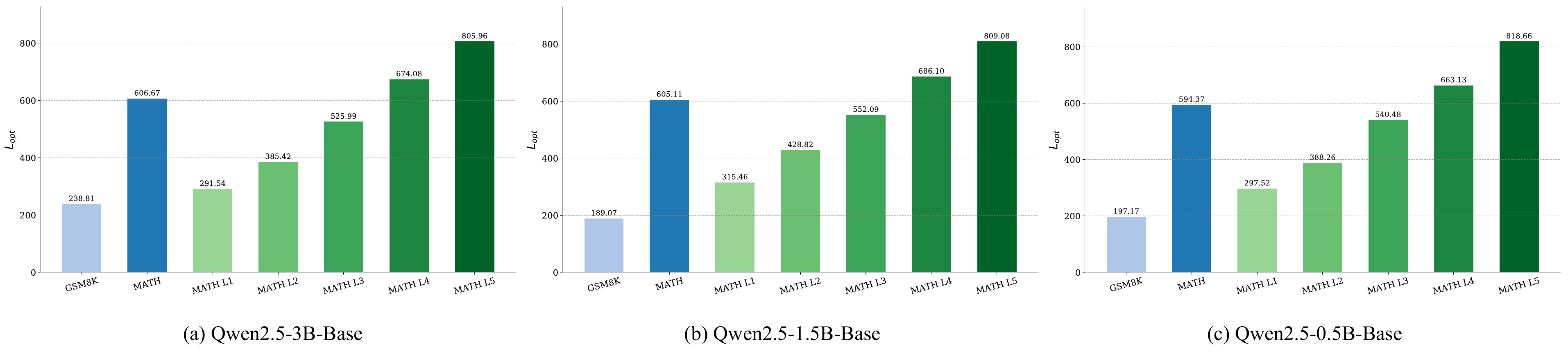}
    \caption{\textbf{Supplementary validation of the relationship between task difficulty and optimal CoT length ($L_\text{opt}$) on additional models.} The plots illustrate that the converged mean response length ($L_\text{opt}$) consistently increases with task difficulty across three different models: (a) Qwen2.5-3B-Base, (b) Qwen2.5-1.5B-Base, and (c) Qwen2.5-0.5B-Base. These results reinforce the findings presented in Figure~\ref{fig:experiment}(a) and demonstrate the robustness of Remark 1 across models of varying capacities.}
    \label{fig:more-experiment}
\end{figure}

To further strengthen the empirical validation of Remark 1, we conducted supplementary experiments to verify the relationship between task difficulty and the optimal CoT length ($L_\text{opt}$) across models of varying capacities. The experimental protocol is identical to that described in Section~\ref{sec:experiments}, but here we replicate the analysis on three additional models from the Qwen2.5 series: Qwen2.5-3B-Base, Qwen2.5-1.5B-Base, and Qwen2.5-0.5B-Base.

The results, presented in Figure~\ref{fig:more-experiment}, consistently corroborate the findings from our main experiment (Figure~\ref{fig:experiment}(a)). Across all three smaller models, we observe the same clear, monotonic trend: the converged optimal CoT length ($L_\text{opt}$) increases systematically as the difficulty of the task escalates from GSM8K to the higher levels of the MATH dataset. This demonstrates that the principle outlined in Remark 1, that more complex problems necessitate a longer reasoning process to avoid underfitting is not specific to a single high-capacity model but holds true as a general principle across models of different scales. 
\section{Broader Explanatory Power of CoT-Space}
\label{app:broader-exp-of-cot-space}

This section further explores the extensibility of the CoT-Space framework, demonstrating its capacity to provide a unified theoretical account for several other critical and complex phenomena observed in LLMs, beyond the ``overthinking'' problem analyzed in the main body of this paper.

\subsection{Hallucination as Trapped Optimization in Phantom Minima}
The phenomenon of hallucination~\citep{ji2023survey}, where a model generates fluent but factually incorrect statements, can be elegantly re-contextualized within our framework. In CoT-Space, reasoning is modeled as an optimization process on a continuous semantic manifold, aimed at minimizing a reasoning loss function $C(s)$. An ideal reasoning trajectory converges to a global minimum $M_q$, which corresponds to the correct solution. Hallucination can thus be explained as the optimization trajectory becoming trapped in a ``phantom minimum'' or a deep local minimum within the loss landscape. These states, while possessing a low reasoning loss due to their local coherence and syntactical correctness, do not correspond to the ground truth. Once a model's reasoning state $s_t$ enters the basin of attraction for such a trap, subsequent gradient-like updates will cause it to converge there, rather than to the true global minimum, yielding a chain-of-thought that is internally consistent but externally false.

\subsection{Prompt Sensitivity as a Consequence of Landscape Instability}
Furthermore, the acute sensitivity of LLMs to minor, semantically irrelevant perturbations in the input prompt~\citep{zhuo2024prosa,razavi2025benchmarking} can be understood as an initial state perturbation problem on an unstable loss landscape. The input query $q$ defines the initial state of the reasoning process, $s_0 = (q, \emptyset)$, which serves as the starting point for the optimization trajectory. The high-dimensional, non-convex loss landscape is likely replete with unstable topological features such as sharp ridges, crevasses, or watersheds. If an initial state $s_0$ is located near such an unstable region, a slight perturbation of the prompt (from $q$ to $q'$) can shift the initial state to $s'_0$. Although the distance between $s_0$ and $s'_0$ in the semantic space may be negligible, they might lie on opposite sides of a watershed. Consequently, their subsequent optimization trajectories will follow entirely different descent paths, converging to distinct minima and producing dramatically different outputs. This provides a formal, geometric interpretation for the model's apparent brittleness.

\subsection{Emergent Abilities as a Topological Phase Transition}
The framework also offers a compelling explanation for the ``emergent abilities'' of LLMs, where complex reasoning capabilities appear to manifest suddenly once model scale surpasses a certain threshold, a phenomenon often described as a ``reasoning cliff'' for smaller models~\citep{wei2022emergent}. This can be interpreted as a change in the topological properties of the reasoning space itself, which is directly influenced by model capacity. Smaller models, with their limited representational power, exhibit a lower reasoning state density $\rho(K)$, as defined in Definition~\ref{def:reasoning-state-density}. As established in Theorem~\ref{thm:convergence-of-continuum-errors}, a low density implies a larger expected distance between adjacent states, rendering the corresponding semantic manifold $\tilde{S}$ sparse, perforated, or even disconnected. For a complex problem, the path from the initial state $s_0$ to the solution $M_q$ may contain unbridgeable gaps, precluding any successful optimization trajectory and thus creating the reasoning cliff. As model scale increases, the representational capacity grows, leading to an exponential increase in the state density $\rho(K)$. This makes the manifold progressively more dense and smooth, effectively ``filling in the holes'' and establishing continuous pathways for reasoning. The emergence of an ability corresponds to a topological phase transition, where the manifold's connectivity reaches a critical point that permits a viable optimization process.

\subsection{The Efficacy of External Slow-Thinking}
Finally, CoT-Space provides a principled basis for the superior performance of external slow-thinking strategies like ToT~\citep{yao2023tree} or MCTS~\citep{wan2024alphazero} compared to standard CoT. The difference in their efficacy can be framed as the difference between two optimization algorithms operating on the same loss landscape. Standard CoT generation is analogous to a greedy, single-path gradient descent. At each step, it commits to a single, locally optimal thought, following a solitary trajectory that is highly susceptible to being trapped in the first local minimum it encounters. In contrast, external slow-thinking strategies maintain multiple parallel candidate reasoning paths. This is equivalent to performing a beam search within the semantic state space. By exploring multiple descent directions from a given state $s_t$, external slow-thinking strategies conduct a more comprehensive exploration of the complex, non-convex loss landscape. This strategy significantly enhances the probability of circumventing local traps and identifying a path to the global optimum, thereby explaining its enhanced effectiveness on complex reasoning tasks. 

\section{Prescriptive Implications}
\label{app:prescriptive-implications}
Beyond explaining current phenomena, CoT-Space provides a theoretical justification for Latent Space Reasoning. Since we prove that the reasoning space converges to a continuous manifold $\tilde{S}$, it is theoretically sound to map discrete CoT steps into this continuous latent space (e.g., via variational auto-encoders). Optimization (reasoning) could then be performed via gradient-based planning in this smooth latent space, avoiding the discrete, non-differentiable nature of token generation, before decoding the final result. Furthermore, our framework suggests that current sparse reward models are inefficient proxies for the smooth reasoning loss $C(s,q)$. Future work should focus on training Continuous Verifiers that predict a scalar ``semantic distance'' to the solution, providing the dense, smooth gradients required for more stable RL optimization. 


\end{document}